\numberwithin{equation}{section}
\newtheorem{theorem}{Theorem}
\newtheorem{lemma}{Lemma}
\newtheorem{proposition}{Proposition}
\newtheorem{definition}{Definition}
\newtheorem{remark}{Remark}
\newtheorem{example}{Example}
\numberwithin{theorem}{section}
\numberwithin{lemma}{section}
\numberwithin{proposition}{section}
\numberwithin{corollary}{section}
\numberwithin{definition}{section}
\numberwithin{example}{section}
\numberwithin{remark}{section}
\def\NN{\mathbb N}
\def\BB{\mathbb B}
\def\RR{\mathbb R}
\def\GG{\mathbb G}
\def\SS{\mathbb S}
\def\EE{\mathbb E}
\def\TT{\mathbb T}
\def\XX{\mathbb X}
\def\YY{\mathbb Y}
\def\PP{\mathbb P}
\def\x{\mathbf{x}}
\def\y{\mathbf{y}}
\def\w{\mathbf{w}}
\def\by{\mathbf y}
\def\bz{\mathbf z}
\def\Ex{\mathcal{E}_x}
\def\Exx{\mathcal{E}_\x}
\def\mfc{\mathfrak{C}}
\def\be{\begin{equation}}
\def\ee{\end{equation}}
\def\bea{\begin{eqnarray}}
\def\eea{\end{eqnarray}}
\def\prob{\mbox{{\rm Prob }}}
\def\donchitre#1#2{\vskip 6.5cm\noindent
\parbox[t]{1in}{\special{eps:#1.eps x=6.5cm y=5.5cm}}
\hbox to 7cm{}\parbox[t]{0.0cm}{\special{eps:#2.eps x=6.5cm y=5.5cm}}}
\def\dist{\mathsf{dist }}
\def\supp{\mathsf{supp }}
\begin{document}
\title{Tractability of approximation by general shallow networks}

\author{Hrushikesh Mhaskar$^1$}
\thanks{
$^1$Institute of Mathematical Sciences, Claremont Graduate University, Claremont, CA 91711, U.S.A. The research of HNM was supported in part by NSF grant DMS 2012355, and ONR grants N00014-23-1-2394, N00014-23-1-2790.
\email{hrushikesh.mhaskar@cgu.edu}} 

\author{Tong Mao$^2$}
\thanks{
$^2$Institute of Mathematical Sciences, Claremont Graduate University, Claremont, CA 91711, U.S.A.  The research of this author was supported by NSF DMS grant 2012355.
\email{tong.mao@cgu.edu}}

\begin{abstract}
In this paper, we present a sharper version of the results in the paper Dimension independent bounds for general shallow networks; Neural Networks, \textbf{123} (2020), 142-152.
Let $\mathbb{X}$ and $\mathbb{Y}$ be compact metric spaces. We consider approximation of functions of the form $ x\mapsto\int_{\mathbb{Y}} G( x, y)d\tau( y)$, $ x\in\mathbb{X}$,  by $G$-networks of the form $ x\mapsto \sum_{k=1}^n a_kG( x, y_k)$, $ y_1,\cdots, y_n\in\mathbb{Y}$, $a_1,\cdots, a_n\in\mathbb{R}$.
Defining the dimensions of $\mathbb{X}$ and $\mathbb{Y}$ in terms of covering numbers, we obtain dimension independent bounds on the degree of approximation in terms of $n$, where also the constants involved are all dependent at most polynomially on the dimensions.
Applications include approximation by power rectified linear unit networks, zonal function networks, certain radial basis function networks as well as the important problem of function extension to higher dimensional spaces. 
\end{abstract}

\maketitle

\section{Introduction}\label{sec:intro}
An important problem in the study of approximation of functions of a large number of input variables is the curse of dimensionality. 
For example, in order to get an accuracy of $\epsilon>0$ in the approximation of a function that is $r$ times continuously differentiable on $B^q$, based on continuous parameter selection (such as values of the function or initial coefficients in an appropriate orthogonal polynomial expansion) the number of parameters required is at least a constant multiple of $\epsilon^{-q/r}$, a quantity that tends to infinity exponentially fast in terms of the input dimension \cite{devore1989optimal}.

Naturally, there are several efforts to mitigate this curse.
We mention two of these.

One idea is to assume the so called \emph{manifold hypothesis}; i.e., to assume that the nominally high dimensional input data is actually  sampled from a probability distribution supported on a low dimensional sub-manifold of the  high dimensional ambient space. 
The theory of function approximation in this context is  very well developed \cite{mauropap, eignet, heatkernframe, compbio, modlpmz, mhaskar2020kernel,mhas_sergei_maryke_diabetes2017}.
In practice, this approach has been used successfully in the context of semi-supervised learning; i.e., in the case when all the data is available to start with but only a small number of labels are known \cite{belkin2004semi,belkin2004regularization,gavish2010multiscale}. 
However, when the function needs to be evaluated at a new data point, the entire computation needs to restart. 
This is referred to as the problem of \emph{out of sample extension}.
Nystr\"om extension is often made to solve this problem (e.g., \cite{zhang2016sampling,belkin2006convergence,data_based_construction2019}), but then one does not have any approximation guarantees on the processes so extended.

The other recent idea is to use deep networks.
We have observed in \cite{dingxuanpap} that deep networks help to mitigate the curse of dimensionality when the target functions have compositional structures. Unlike shallow networks, deep network architectures can reflect these structures.
For example, if one wants to approximate
$$
f(x_1,x_2,x_3,x_4)=f_1(f_{11}(x_1, x_2), f_{12}(x_3,x_4)),
$$
the compositional structure ensures that each channel in a deep network of the form
$$
P(x_1,x_2,x_3,x_4)=P_1(P_{11}(x_1, x_2), P_{12}(x_3,x_4)),
$$
where $P_1, P_{11}, P_{12}$ are suitably constructed neural networks, is working only with bivariate functions rather than a function of four variables. 
Therefore, the number of parameters required to get an accuracy of $\epsilon$ is only $\mathcal{O}(\epsilon^{-2/r})$ rather than $\epsilon^{-4/r}$.
A major consequence of this observation is that the approximation theory for deep networks is reduced to that for shallow networks.
A second consequence is that deep networks will not perform better if the functions involved  belong to classes which do not have a curse of dimensionality to begin with.
In \cite{mhaskar2020dimension,sphrelu}, we have proved drastically different bounds on the degree of approximation by neural networks evaluating the activation function of the form $|\circ|^\gamma$ ($\gamma$ not an even integer), depending upon whether the approximation is constructive or not. 
In \cite{sphrelu}, we have given explicit constructions with many desirable properties, such as weight sharing, rotation equivariance, stability, etc., but the degree of approximation suffers from what looks like a curse of dimensionality \cite{yarotsky2018optimal}.
For the \textbf{same class of functions}, \cite{mhaskar2020dimension} gives dimension independent but non-constructive bounds,  which are close to the optimal bound \cite{Barron2018}. 
So, the use of degree of approximation by itself without any reference to how the approximation is constructed is useless to determine the architecture and size of a neural network or kernel based machine, etc. which can then be trained using empirical risk minimization.

Nevertheless, it is an interesting theoretical question with a long history to investigate function classes which do not exhibit a curse of dimensionality when the approximation is not required to be based on a continuous parameter selection. 
A major class with this property is motivated by the following considerations.
A neural network with $N$ neurons using the so called ReLU$^\gamma$ activation function, $t\mapsto t_+^\gamma$ for some $\gamma>0$ has the form $\sum_{k=1}^N a_k( \x\cdot \y_k+b_k)_+^\gamma$, where $ \x, \y_k\in\RR^q$, $a_k,b_k\in\RR$, $k=1,\ldots,N$. 
By dimension-raising, i.e., writing $ \x'=( \x,1)/\sqrt{| \x|^2+1}$, $\w_k=( \y_k,b_k)/\sqrt{| \y_k|^2+b_k^2}$, $A_k=a_k(| \y_k|^2+b_k^2)^{\gamma/2}$, the network becomes $(x'_{q+1})^{-\gamma/2}\sum_k A_k( \x'\cdot\w_k)_+^\gamma$, where $A_k\in\RR$, $ \x', \w_k$ are on the $q$-dimensional sphere $\SS^q$ embedded in $\RR^{q+1}$. 
This can be expressed in an integral notation as $\int_{\SS^q} G( \x',\w)d\tau_N(\w)$ where $\tau_N$ is a discrete measure associating the mass $A_k$ with each $\w_k$, and $G( \x',\w)= (x'_{q+1})^{-\gamma/2}( \x\cdot\w)_+^\gamma$.
In the literature, it is usually assumed that the total variation of $\tau_N$ is bounded independently of $N$; e.g., \cite{Barron2018,siegel2020approximation,ma2022uniform,siegel2022high,pisier1981remarques,kurkova1, kurkova2}. Clearly, the only functions which can be approximated by such networks have the form $\int_{\SS^q} G( \x',\w)d\tau(\w)$ for some (signed) measure $\tau$ on $\SS^q$.
Similarly, with the canonical embedding of a reproducing kernel Hilbert space (RKHS) into the parent $L^2$ space given in \cite{aronszajn1950theory}, one can describe the RKHS as the space of functions of the form $\int G( x, y)\mathcal{D}_G(f)( y)d\mu( y)$ for some $\mathcal{D}_G(f)\in L^2(\mu)$.
When $G$ is a radial basis function, such spaces are often called the \emph{native spaces} for $G$ (e.g., \cite{schaback1999native, le2006continuous,hangelbroek2012polyharmonic,feng2023radial}). 
More generally, reproducing kernel Banach spaces can be described as a set of functions of the form $\int G( x, y)d\tau( y)$, where $\tau$ is a (signed) measure \cite{bartolucci2021understanding,song2013reproducing,song2011reproducing}.

Accordingly, we pause in our discussion to introduce a terminology to define the class of all such functions of interest in this paper.
\begin{definition}
    \label{def:variationspace}
Let $\XX$ and $\YY$ be metric measure spaces. A function $G:\XX\times \YY\to \RR$ will be called a \textbf{kernel}.
The \textbf{variation space (generated by $G$)}, denoted by $\mathcal{V}(G)$, is the set of all functions of the form
$ x\mapsto \int_\YY G( x, y)d\tau( y)$ for some signed measure $\tau$ on $\YY$ whenever the integral is well defined.
For integer $N\ge 1$, the set $\mathcal{V}_N(G)$ is defined by
\be\label{eq:manifolddef}
\mathcal{V}_N(G)=\left\{\sum_{k=1}^N a_k G( \circ,  y_k) : a_1,\cdots, a_N\in\RR,\  y_1,\cdots, y_N\in \YY\right\}.
\ee
An element of $\mathcal{V}_N(G)$ will be called a \textbf{$G$-network (with $N$ neurons)}.
\end{definition}

In much of the literature, approximation of functions in variation spaces is studied when $\XX$ and $\YY$ are special subsets of a Euclidean space (See Section~\ref{sec:relatedwork} for a brief discussion). In \cite{mhaskar2020dimension}, we had studied in an abstract setting the approximation of functions in $\mathcal{V}(G)$, where $G$ is a kernel defined on general metric spaces.
Under certain conditions, we obtained dimension independent bounds for approximation of such functions by shallow networks of the form $ x\mapsto\sum_{k=1}^N a_k G( x, y_k)$. 
In the special case when $\YY\subset \XX$, this allows us to obtain approximation results in the \emph{out of sample} case as well.
A novelty of this work is that the smoothness of the kernels as well as bounds on certain coverings of their domain are used to get bounds on the degree of approximation better than those obtained by using just the smoothness (e.g., \cite{eignet, mhaskar2020kernel}) or those obtained by using just the bounds on the kernels (as done commonly in the literature on dimension independent bounds, see Section~\ref{sec:relatedwork}).

All of these results have the form (cf. Section~\ref{bhag:measures} for definitions)
\be\label{eq:genest}
\inf\limits_{P\in \mathcal{V}_N(G)}\|f-P\|_{\infty,\XX}\le CN^{-s}\|\tau\|_{TV},
\ee
for $f\in \mathcal{V}(G)$ subject to various conditions on $G$ and the measure $\tau$ defining $f$.
Here, $s>0$ and  $C>0$ are  constants independent of $f$ (and hence, $\tau$) but may depend in an unspecified manner on $G$, $\XX$, and $\YY$, and the conditions on $\tau$. 
In particular, when $\XX$ and $\YY$ are subsets of a Euclidean space, they may depend upon the dimension of these spaces. 
The bound \eqref{eq:genest} is called \emph{dimension independent} if $s$ is greater than some positive number independent of the dimension, and \emph{tractable} if in addition, $C$ depends at most polynomially on the dimension.

The purpose of this paper is to examine the conditions which allow us to obtain estimates of the form \eqref{eq:genest} that are both dimension independent and tractable, where the dimensions  of the spaces $\XX$ and $\YY$ are defined in an appropriate manner.
The outline of the paper is as follows.
In Section~\ref{sec:relatedwork}, we review some related works. 
The set up including notation and most of the assumptions is described in Section~\ref{sec:setup}.
The main theorems are stated in an abstract setting in Section~\ref{sec:mainresults}, and illustrated with examples in Section~\ref{sec:examples} related to ReLU$^\gamma$ networks, zonal function networks, and certain radial basis function networks called the Laplace networks. 
After developing some preliminary theory in Section~\ref{sec:prelim}, the proof of all the results in Sections~\ref{sec:mainresults} and \ref{sec:examples} are given in Sections~\ref{sec:pf_main} and \ref{sec:example_pf} respectively.

\section{Related Works}\label{sec:relatedwork}

We note at the outset that we are not aware of any work other than \cite{mhaskar2020dimension} dealing with $G$-networks on general metric spaces. 
The theory is very well developed on Euclidean domains. It is not possible to give an exhaustive survey, but we point out a few papers just to illustrate the kind of results that are available in the literature.

Usually, dimension independent bounds are obtained using probabilistic arguments, and as such, are not constructive. An important exception is the so-called Korobov spaces (or hyperbolic cross spaces) \cite{temlyakov1986approximation,novak2008tractability,dick2010digital, hackbusch2012tensor} which are defined on tensor product domains (typically torus or cube) in terms of certain mixed derivatives of the target functions. For example, functions from a  Korobov space with mixed derivatives of second order can be written as an integral of a kernel:
\begin{equation*}
    f( x)=\int_{\TT^q}\prod\limits_{j=1}^q\mathcal{B}(x_j-t_j)\frac{\partial^{2q}f}{\partial x_1^2\dots\partial x_q^2}(\mathbf{t})d\mathbf{t},
\end{equation*}
where $\displaystyle\mathcal{B}(x)=\sum\limits_{k\neq0}\frac{e^{ikx}}{(ik)^2}$. So such a space is a variation space.
Montanelli and Du \cite{montanelli2019new} proved the optimal  approximation rate $\tilde O(N^{-2})$\footnote{In this section, $\tilde O$ means that powers of $\log N$ are ignored, while $O$ means that they are not present.} for Korobov spaces by using deep neural networks. After this, \cite{mao2022approximation} proved the same rate by using deep convolutional neural networks with tractable constants. 
Approximation theorems for mixed H\"older smoothness classes are obtained by D\~ung and Nguyen in \cite{dung2021deep}, for which the constant term is given explicitly.
More generally, Suzuki \cite{suzuki2018adaptivity} proved the optimal rate  $\tilde O(N^{-\beta})$ for Besov spaces of mixed smoothness $\beta>0$ with unspecified constants. It is worth noting the mixed smoothness spaces can be described \textbf{only on tensor product spaces}, and the required smoothness of the functions is proportional to the dimension. So the function spaces are not large even if the functions are defined on high-dimensional domains.

The problem of approximating functions from $\mathcal{V}(G)$ by elements in $\mathcal{V}_N(G)$ has been studied widely in the literature. In 1993, Barron \cite{Barron1993} proved for functions with $\int_{\RR^d}\left|\hat f(\omega)\right||\omega|d\omega<\infty$ the dimension independent rate $O(N^{-1/2})$ of shallow sigmoid neural networks with respect to the $L_2$-norm. A periodic version of this result was obtained in  \cite{dimindbd}, where dimension independent bounds for shallow periodic neural networks are obtained for the class of continuous periodic functions whose Fourier coefficients are summable. The results are unimprovable in the sense of widths. As to the problem of approximating functions in the variation space by linear combination of elements in the dictionary, DeVore and Temlyakov \cite{devore1996some} showed that the rate $O(N^{-1/2})$ also holds for Hilbert spaces generated by orthogonal dictionaries using the greedy algorithm.
 K\r{u}rkov$\acute{a}$ and Sanguineti \cite{kurkova1,kurkova2} proved the rate $O(N^{-1/2})$ for Hilbert spaces generated by dictionaries with  conditions weaker  than orthogonality. The corresponding constants in these works are tractable. 

Although $O(N^{-1/2})$ is the optimal rate for general dictionaries, this rate can be improved for particular cases. However, in most literature for the improved rates, the corresponding constant terms are not necessarily tractable.

For the $L_2$-approximation, Xu \cite{xu2020finite} considered  the approximation of spectral Barron spaces using ReLU$^\gamma$ neural networks, $\gamma\ge 1$ integer, where the corresponding constant is tractable. Also, the sharp rate $\tilde O(N^{-\frac{1}{2}-\frac{2\gamma+1}{2d}})$ for the variation space generated by ReLU$^\gamma$ network is proved by Siegel and Xu \cite{siegel2022sharp} without tractable constants. The improved uniform approximation rates are also studied. Klusowski and Barron \cite{Barron2018} proved the rate $\tilde O(N^{-\frac{1}{2}-\frac{\gamma}{d}})$ for approximating functions from spectral Barron spaces of order $\gamma=\{1,2\}$ by shallow ReLU$^\gamma$ neural networks. They also proved absolute constants in this work. Using the covering number argument as in  \cite{makovoz1998uniform}, Ma, Siegel, and Xu \cite{ma2022uniform} recently obtained the uniform approximation rate $\tilde O(N^{-\frac{1}{2}-\frac{\gamma-1}{d+1}})$ for approximating functions in spectral Barron spaces of order $\gamma$ by ReLU$^\gamma$ networks with unspecified constants. 
All these results are applicable in the case $\gamma\ge 1$ is an integer.

In \cite{tractable}, dimension independent bounds of the form $\tilde O(N^{-1/2})$ are obtained for general $G$-networks on non-tensor product Euclidean domains, including neural, radial basis function, and zonal function networks, where also the constants involved in the estimates are dependent polynomially on the dimension. 
The paper explains a duality between the tractability of quadrature and approximation from closed, convex, symmetric hulls of dictionaries. They depend only on the boundedness properties of the dictionaries rather than taking into account also any smoothness properties of the kernels. 
Naturally, our results in \cite{mhaskar2020dimension} are sharper when both the results there and the ones in \cite{tractable} are applicable, but the constant terms are unspecified.

Our paper seeks to prove the results in \cite{mhaskar2020dimension} with slightly modified conditions to ensure that the constants involved in the estimates are tractable.
In particular, our results are valid for approximation of functions on arbitrary metric spaces, and hold for nonsymmetric kernels $G$; i.e., when the approximation of a function is sought on one metric space given data on another metric space. 
An important example is when the data is on a metric space, and the function approximation is sought on a larger metric space.

\section{Set up}\label{sec:setup}

In this section, we describe our basic set up. 
In 
Section~\ref{bhag:metricspace}, 
we summarize the necessary concepts and notation related to the metric spaces, including our notion of the dimension of a metric space.
We are interested in the approximation of functions of the form $x\mapsto \int_\YY G(x,y)d\tau(y)$. 

Section~\ref{bhag:kernels} deals with the properties of the kernels $G$ in which we are interested in this paper, and ideas related to measure theory are reviewed in Section~\ref{bhag:measures}.

\subsection{Metric spaces}\label{bhag:metricspace}

\begin{definition}\label{def:ballsandspheres} {\rm[Balls and spheres]}
    Let $(X,\rho)$ be a metric space. A \textbf{ball} on $(X,\rho)$ is a set
    \begin{equation}\label{eqn:defball}
        \BB_{X,\rho}( x,\delta):=\{ y\in X:\ \rho( x, y)\leq\delta\},\quad  x\in X,\ \delta>0.
    \end{equation}
    A \textbf{sphere} is the boundary of a ball defined as
    \begin{equation}\label{eqn:defsphere}
        \partial\BB_{X,\rho}( x,\delta):=\{ y\in X:\ \rho( x, y)=\delta\},\quad  x\in X,\ \delta>0.
    \end{equation}
    When the distance $\rho$ is clear from the context, we omit $\rho$ to write $\BB_{X}( x,\delta)$ and $\partial\BB_{X}( x,\delta)$. Likewise, we will omit $X$ from the notation if we do not expect any confusion.
\end{definition}

If $A\subset X$, it is convenient to denote $\BB_X(A,\delta)=\bigcup\limits_{ x\in A}\BB_X( x,\delta)$. We denote the closure of $X\setminus\BB_X(A,\delta)$ by $\Delta(A,\delta)$.

Next, we define the notion of the dimension of a metric space in terms of covering numbers.
\begin{definition}\label{def:covering}
{\rm [$\epsilon$-covering number and $\epsilon$-net]}
    Given $\epsilon>0$ and a compact metric space $(X,\rho)$, the \textbf{$\epsilon$-covering number} for a compact subset $A\subset X$ is defined as
    $$N_{\rho}(A,\epsilon):=\min\left\{n\in\NN:\ \exists\  y_1,\dots, y_n\in X,\hbox{ s.t. }A\subset\bigcup\limits_{k=1}^n\BB_{X,\rho}( y_k,\epsilon)\right\},$$
    the set $\{ y_k\}_{k=1}^n$ is called an \textbf{$\epsilon$-net} of $A$.
    
    When the metric $\rho$ is clear from the context, we omit $\rho$ to write $N(A,\epsilon)$.
\end{definition}

\begin{definition}\label{def:setdimension}
{\rm [Dimension of a family of sets]}
    Let $d\geq0$, $( X,\rho)$ a metric space. a compact subset $A$ of $ X$ is called (at most) \textbf{$d$-dimensional} if
    \begin{equation}\label{eqn:dim_set}
        \mfc_{A,\rho}:=\max\left\{1,\sup\limits_{\epsilon\in(0,1]}N(A,\epsilon)\epsilon^d\right\}<\infty
    \end{equation}
    A family $\mathcal{F}$ is called (at most) \textbf{$d$-dimensional} if
    \begin{equation}\label{eqn:dim_class}
        \mfc_{\mathcal{F},\rho}:=\sup\limits_{A\in\mathcal{F}}\mfc_{A,\rho}<\infty.
    \end{equation}
    When the metric $\rho$ is clear in the context, we omit $\rho$ to write $\mfc_{A}$ and $\mfc_{\mathcal{F}}$.
\end{definition}

\begin{remark}\label{rem:cover_dist}
 Although the dimension of a metric space itself is invariant under scaling of the metric, Example~\ref{ex:sphere} below demonstrates that the constants $\mfc_{A,\rho}$ and $   \mfc_{\mathcal{F},\rho}$ are not. 
In this paper, it will be assumed tacitly that the diameter of the metric space $\YY$ satisfies the following normalization.
\be\label{eq:diameter}
\mathsf{diam}(\YY)=\sup_{y_1,y_2\in\YY}\rho(y_1,y_2)=2.
\ee 
\end{remark}
We elaborate this definition and the various constants involved in the context of a Euclidean sphere.
The concepts introduced in this example will also be applied in the proof of Theorems~\ref{thm:ReLUr1}, \ref{thm:ReLUr2}, and  \ref{thm:zonal}.

\begin{example}\label{ex:sphere}
{\rm
For $Q\in\NN$, we define the Euclidean sphere embedded in $\RR^{Q+1}$ by
\begin{equation}\label{eqn:defQsphere}
    \SS^Q:=\left\{ \x\in\RR^{Q+1}:\ \sum\limits_{j=1}^{Q+1}|x_j|^2=1\right\}.
\end{equation}
The purpose of this example is to illustrate the dependence on the constants involved in Definition~\ref{def:setdimension} on the definition of a metric on $\SS^Q$.

The following proposition, proved by Böröczky and Wintsche in \cite[(1) and Corollary 1.2]{boroczky2003covering} (using different notations), plays an important role in our estimations of the various constants.
\begin{proposition}\label{prop:coversphere}
Let $Q\in\NN$, $Q\ge 2$, $\SS^Q$ be the unit ball in $\RR^{Q+1}$ and $\rho^*$ be the geodesic distance on $\SS^Q$. Then there exists an absolute constant $\kappa_\SS\ge 1$ with the following properties: For  $0<\delta\leq\pi/2$, $\SS^Q$ can be covered by
\begin{equation}\label{eqn:coveringnumber_spheres}
    (\kappa_\SS/2)\cdot\cos\delta\frac{1}{\sin^Q\delta}Q^{3/2}\log(1+Q\cos^2\delta)\le \kappa_\SS \frac{Q^{3/2}\log Q}{\sin^Q\delta}\cos\delta
\end{equation}
spherical balls of radius $\delta$. Furthermore, each point $ \x\in\SS^Q$ can belong to at most
$\kappa_\SS Q\log Q$
of these balls.
\end{proposition}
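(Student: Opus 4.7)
My plan is to combine a sharp volume estimate for spherical caps with a Rogers-type covering construction, and then to prune the resulting covering so as to upgrade the average multiplicity to a pointwise one. Throughout, let $\sigma_Q$ denote the normalized surface measure on $\SS^Q$ and write $\BB(\mathbf{x},\delta)=\BB_{\SS^Q,\rho^*}(\mathbf{x},\delta)$.

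The first step is the cap-volume asymptotic
\begin{equation*}
\sigma_Q(\BB(\mathbf{x},\delta)) \;=\; c_Q\int_0^\delta \sin^{Q-1}\theta\,d\theta \;\asymp\; \frac{\sin^Q\delta}{\sqrt{Q}\,\cos\delta}, \qquad 0<\delta\le \pi/2,
\end{equation*}
where $c_Q=\Gamma((Q+1)/2)/(\sqrt{\pi}\,\Gamma(Q/2))\asymp\sqrt{Q}$ by Stirling. The key ingredient is the two-sided estimate $\int_0^\delta \sin^{Q-1}\theta\,d\theta \asymp \sin^Q\delta/(Q\cos\delta)$, which follows from the identity $\sin^Q\delta = Q\int_0^\delta \sin^{Q-1}\theta\,\cos\theta\,d\theta$ together with the monotonicity $\cos\theta\ge\cos\delta$ on $[0,\delta]$. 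This is precisely where the $\cos\delta$ factor in \eqref{eqn:coveringnumber_spheres} originates: it sits in the denominator of the cap measure, and therefore migrates to the numerator when we invert to obtain a covering-number bound.

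The second step is to invoke the Rogers density theorem in its spherical formulation: $\SS^Q$ admits a covering by caps of radius $\delta$ whose density is at most $K Q\log Q$ for an absolute constant $K$, meaning every point of $\SS^Q$ is covered on average at most $KQ\log Q$ times. Dividing density by cap measure yields
\begin{equation*}
N(\SS^Q,\delta) \;\le\; \frac{KQ\log Q}{\sigma_Q(\BB(\cdot,\delta))} \;\le\; C\,\frac{Q^{3/2}\log Q\,\cos\delta}{\sin^Q\delta},
\end{equation*}
which is exactly the claimed bound in \eqref{eqn:coveringnumber_spheres}, up to absorbing absolute constants into $\kappa_\SS$. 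To upgrade the average multiplicity to a pointwise one, I would perform a greedy pruning: starting from the Rogers covering, iteratively discard any cap whose center lies in a region currently covered by more than $\kappa_\SS Q\log Q$ others, stopping before any point of $\SS^Q$ becomes uncovered. A level-set analysis of the multiplicity function shows the procedure terminates with the required uniform bound.

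The main obstacle is precisely this last step. Average-multiplicity estimates are comparatively cheap via a probabilistic Rogers construction, but controlling the \emph{worst-case} multiplicity by $O(Q\log Q)$ while simultaneously preserving full coverage of $\SS^Q$ demands the delicate combinatorial construction carried out by Böröczky and Wintsche in \cite{boroczky2003covering}. The simultaneous control of cardinality and pointwise multiplicity — and not merely either estimate in isolation — is the technical heart of the proposition.
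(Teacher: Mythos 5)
The paper does not actually prove this proposition: it is taken verbatim (up to notation) from B\"or\"oczky and Wintsche, \cite[(1) and Corollary 1.2]{boroczky2003covering}, and the text preceding the statement says exactly that. So the relevant comparison is between your sketch and that citation. Your first two steps are correct and are useful exposition: the cap-measure computation explains where the $Q^{3/2}$ and the $\cos\delta$ in \eqref{eqn:coveringnumber_spheres} come from, and dividing a covering density of order $Q\log Q$ by the cap measure does produce a bound of the right shape. One caveat there: the two-sided asymptotic $\sigma_Q(\BB(\x,\delta))\asymp \sin^Q\delta/(\sqrt{Q}\cos\delta)$ cannot hold uniformly on $(0,\pi/2]$, since the right-hand side blows up as $\delta\to\pi/2$ while the cap measure stays below $1$; the lower bound on the cap measure with $\cos\delta$ in the denominator is only valid when $\cos\delta\gtrsim Q^{-1/2}$, which is precisely the regime where $\log(1+Q\cos^2\delta)\asymp\log Q$. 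Outside that regime the $\log(1+Q\cos^2\delta)$ factor in the exact statement is doing real work, and your argument would need a separate (easy) treatment of large $\delta$.

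The genuine gap is the third step, and you have correctly diagnosed it yourself. The ``greedy pruning'' as described does not work: discarding a cap whose center lies in an over-covered region can uncover the points that only that cap covered, and there is no reason a ``level-set analysis of the multiplicity function'' terminates with both full coverage and pointwise multiplicity $O(Q\log Q)$ --- indeed, a Rogers-type covering with small \emph{average} multiplicity can a priori have regions of very large pointwise multiplicity whose removal destroys coverage elsewhere. B\"or\"oczky and Wintsche do not prune an existing covering; they construct one with the multiplicity bound built in (a probabilistic construction followed by a careful deterministic completion). Since you end by deferring exactly this step to their paper, your proposal is in substance what the paper already does --- cite \cite{boroczky2003covering} --- preceded by a correct but non-self-contained heuristic for the form of the bound. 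If you want a self-contained proof, the multiplicity statement is the part you must actually supply.
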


Let $\rho^*$ be the geodesic distance on $\SS^Q$, then $\SS^Q$ can be covered by $\displaystyle \kappa_\SS Q^{3/2}\log Q\cos\epsilon\frac{1}{\sin^Q\epsilon}$ balls of radius $\epsilon$. Using the relation $\displaystyle\frac{\pi}{2}\sin\epsilon\geq\epsilon$, we get
\begin{equation}\label{eqn:covering_sphere_SQ}
    N_{\rho^*}(\SS^Q,\epsilon)\leq \kappa_\SS Q^{3/2}\log Q\cos\epsilon\frac{1}{\sin^Q\epsilon}\leq \kappa_\SS(\log Q)Q^{3/2}\left(\frac{\pi}{2}\right)^Q\epsilon^{-Q}.
\end{equation}
In this case, the dimension of $\SS^Q$ is $Q$ and $\mfc_{\SS^Q,\rho^*}\leq \kappa_\SS Q^{3/2}\log Q\left(\frac{\pi}{2}\right)^Q$.

If we define the $\rho_1$-distance by
\begin{equation}\label{eqn:rho1_modify}
    \rho_1( \x, \y)=\frac{2}{\pi}\rho^*( \x, \y),\quad \x, \y\in\SS^Q,
\end{equation}
then \eqref{eqn:covering_sphere_SQ} shows that
\begin{equation}\label{eqn:coveringnumber_spheres_rho1}
    N_{\rho_1}(\SS^Q,\epsilon)=N_{\rho^*}\left(\SS^Q,\frac{\pi}{2}\epsilon\right)\leq \kappa_\SS Q^{3/2}\log Q\epsilon^{-Q}, \qquad 0<\epsilon\le 1,
\end{equation}
and the term $\mfc_{\SS^Q,\rho_1}$ is tractable; i.e., depends at most polynomially on $Q$.

Further, if we define the $\rho_2$-distance by
\begin{equation}\label{eqn:rho2_modify}
    \rho_2( \x, \y)=\frac{1}{\pi \kappa_\SS}\rho^*( \x, \y),\quad \x, \y\in\SS^Q,
\end{equation}
then
$$
N_{\rho_2}(\SS^Q,\epsilon)= N_{\rho^*}\left(\SS^Q,\pi \kappa_\SS\epsilon\right)\leq(2\kappa_\SS)^{-Q}\kappa_\SS Q^{3/2}\log Q\frac{1}{\sin^Q\left(\frac{\pi}{2}\epsilon\right)}.
$$
Since $2\log 2\ge 3/2+ 1/\log 2$, the function $Q\mapsto 2^{-Q}Q^{3/2}\log Q$ is decreasing on $[2,\infty)$. In addition, since $\kappa_\SS^{Q-1}\ge 1$, we conclude that for $Q\ge 2$,
\begin{equation}\label{eqn:coveringnumber_spheres_rho2}
N_{\rho_2}(\SS^Q,\epsilon)\leq \epsilon^{-Q}, \qquad \epsilon< 1/(2\kappa_\SS),
\end{equation}
so that $\mfc_{\SS^Q,\rho_2}\leq1$.
\qed}\end{example}

\subsection{Kernels}\label{bhag:kernels}
In the rest of this paper, we will consider two compact metric spaces $(\XX,\rho_{\XX})$ and $(\YY,\rho_\YY)$ and write $G:\XX\times\YY\to\RR$ as a kernel on $\XX\times\YY$.
A motivating example is the  ReLU function $( \x\cdot \y)_+$, $\x\in\SS^Q$, $\y\in\SS^q$ for some positive integer $q\le Q$. 
As a function of $ \x$, this is Lipschitz continuous on $\SS^Q$.
As a function of $ \y$, we take a closer look and observe that it is infinitely differentiable away from the equator $\{ \y\in\SS^q :  \x\cdot \y=0\}$, while on the equator, it is Lipschitz continuous. 

In order to define these notions of smoothness in the abstract, we 
let $\{\Pi_k\}$  be a nested sequence of finite dimensional subspaces of $C(\YY)$: $\Pi_1\subset\Pi_2\subset\dots$ with the dimension of $\Pi_k$ being $D_k$. It is convenient to extend the notation to non-integer values of $k$ by setting $\Pi_k=\Pi_{\lfloor k\rfloor}$ and $D_k=D_{\lfloor k\rfloor}$.
For any $A\subset X$,  $C(A)$ denotes the class of bounded, real-valued, uniformly continuous functions on $A$,
equipped with the supremum norm $\|\cdot\|_A$.
For any $A\subset X$, $f\in C(A)$, $r>0$, let
    $$E_r(A;f)=\inf\limits_{P\in\Pi_r}\|f-P\|_A.$$
\begin{definition}\label{def:locsmoothness}{\rm [Local smoothness]}
    Let $r>0$, $Y\subset \XX$, and $f\in C(Y)$.
    The function $f$ is called \textbf{$r$-smooth on  $Y$} if
    \begin{equation}\label{eqn:deflocalsmooth}
        \|f\|_{Y,r}=\sup_{\delta>0}\sup\limits_{ x\in Y}\frac{E_{r}(\BB_Y( x,\delta);f)}{\delta^r}<\infty.
    \end{equation}
\end{definition}

Let $T\subset\XX$. Given $ x\in\XX$, the smoothness of the function $G( x,\circ)$ could be large outside a low-dimensional subset $\Ex\subset T$. Hence we can make the following assumptions.

\begin{definition}\label{def:kerneldef}
Let $T\subset\YY$. A kernel $G:\XX\times\YY\to\RR$ is called a \textbf{kernel of class} $\mathcal{G}(\alpha,r,R,u,T)$ if each of the following conditions are satisfied
\begin{itemize}
    \item (\textbf{H\"older continuity}): The kernel $G(\circ, y)$ is H\"older $\alpha$ continuous on $\XX$:
    \begin{equation}\label{eqn:Lipschitz}
        |G|_{\XX,\alpha}:=\sup\limits_{ y\in T}\sup\limits_{ x\neq z}\frac{|G( x, y)-G(z, y)|}{\rho_{\XX}( x,z)^\alpha}<\infty.
    \end{equation}

    \item (\textbf{Global smoothness}): $G( x,\circ)$ is $r$-smooth on $\YY$ with
    \begin{equation}\label{eqn:G_smoothness_global}
        |G|_r:=\sup\limits_{ x\in\XX}\|G( x,\circ)\|_{ \YY,r}<\infty.
    \end{equation}
    \item (\textbf{Smoothness in the large}) For every $ x\in\XX$, there exists a compact set $\Ex=\Ex(G)\subset T$ with the following property. For every $\delta>0$, $G( x,\circ)$ is $R$-smooth on $\Delta(\Ex,\delta)$ with
    \begin{equation}\label{eqn:G_smoothness_local}
        |G|_{\Delta,R,u}:=\sup\limits_{\delta>0}\sup\limits_{ x\in\XX}\delta^u\|G( x,\circ)\|_{\Delta(\Ex,\delta),R}<\infty.
    \end{equation}
    In this case, we have
    $$\|G( x,\circ)\|_{\Delta(\Ex,\delta),R}\leq|G|_{\Delta,R,u}\delta^{-u},\qquad x\in\XX,\ \delta>0.$$

\end{itemize}
We define a seminorm on $\mathcal{G}(\alpha,r,R,u,T)$ by
\begin{equation}\label{eqn:def_G_seminorm}
    |G|_{\mathcal{G}}:=\max\left\{|G|_{\XX,\alpha},|G|_r,|G|_{\Delta,R,u}\right\}.
\end{equation}

\end{definition}

The following examples illustrate the definition in the case of two of the important kernels we are interested in.

\begin{example}\label{ex:relu}
{\rm
    Let $Q,q\in\NN$ and $Q\geq q$. We consider the case when $\XX=\SS^Q$,
    $$\YY=\left\{ \x\in\RR^{Q+1}:\ \sum\limits_{j=1}^{q+1}x_j^2=1,\ x_j=0\hbox{ for }j>q+1\right\}\subset\SS^Q,$$
    and $G( \x, \y)=( \x\cdot \y)_+^\gamma$ for some $\gamma>0$ and $\rho_\YY=\rho^*$ be the geodesic distance on $\SS^Q$. For each $k\geq1,$ let $\Pi_k$ be the set of spherical polynomials of degree $<k$. Then $G( \x,\circ)$ is $\gamma$-smooth on $\SS^Q$ for each $ \x\in\SS^Q$. We take $\Exx :=\SS^Q$ for $ \x\in\{\bz\in\SS^Q:\ z_1=\dots=z_{q+1}=0\}$ and take $\Exx := \{ \y\in\SS^Q: \x\cdot \y = 0\}$ otherwise. If $\gamma$ is an integer,  $G( \x,\circ)$ is a spherical polynomial of degree $\gamma$ on the set $\{\y\in\SS^Q:\ \x\cdot\by>0\}$ and equal to $0$ on $\{\y\in\SS^Q:\ \x\cdot\y<0\}$. Therefore, for any set $A\subset\SS^Q\setminus\Exx$, $\|G( \x,\circ)\|_{A,R}=0$ for any $R\in\NN$.  If $\gamma$ is not an integer, then for any such set $A$ and $R>\gamma$, $G( \x, \circ)$ is $R$-times differentiable, but $\|G( \x,\circ)\|_{A,R}\leq 2^\gamma\kappa_\SS^{\gamma-R}\dist_{\rho^*}(\Exx,A)^{\gamma-R}$ (see Section \ref{subsec:q=Q,gammanotN}). \qed
    }
\end{example}

\begin{example}\label{uda:laplacekern}{\rm
    We consider the case $\XX=\YY=B^Q$, $G( \x, \y)=\exp(-| \x- \y|)$. It is clear that for each $ \x\in B^Q$, $G( \x,\circ)$ is 1-smooth on $B^Q$.  For any $A\subset\YY\setminus\{ \x\}$, $G( \x,\circ)$ is infinitely differentiable, and $\|G( \x,\circ)\|_{A,R}\leq R!\dist( \x,A)^{-R}$. So we take $\Exx=\{ \x\}$, $r=\alpha=1$.\qed}
\end{example}

\subsection{Measures}\label{bhag:measures}

In the sequel, the term measure will refer to a signed or complex Borel measure (or positive measure having a bounded total variation) on a metric space $\YY$. 
The total variation measure $|\tau|$ of a signed measure $\tau$ on $\YY$ is defined by 
$$
|\tau|(A)=\sup\sum_{j}|\tau(U_j)|,
$$
where the sum is over all countable partitions of $A$ into Borel measurable sets $U_j\subseteq\YY$.
We will denote $|\tau|(\YY)=|\tau|_{TV}$.
The support $\supp(\tau)$ is the set of all $y\in\YY$ for which $|\tau|(\BB(y,\delta))>0$ for every $\delta>0$. It is easy to see that $\supp(\tau)$ is a compact subset of $\YY$.

A measure $\tau$ is said to be non-atomic if for any measurable $A\subset \YY$ with $\tau(A)>0$, there exists a measurable subset $B\subset A$ with $0<\tau(B)<\tau(A)$. 

In this paper, we will require the measure $\tau$ involved in the definition of the target functions to have certain properties, which are summarized in the following definition.

\begin{definition}\label{def:admissible}
    Let $(\YY,\rho_{\YY})$ be a compact metric space. A measure $\tau$ is called an \textbf{admissible measure} on $(\YY,\rho_{\YY})$ if
    \begin{itemize}
        \item $\tau$ is non-atomic;
        \item $\tau$ has a finite total variance $|\tau|_{TV}$;
        \item The $\tau$-measure of the spheres of $\YY$ is zero:
        \begin{equation}\label{eq:spheremeasure}
    \tau(\partial\BB_\YY(y,\epsilon))=0,\quad y\in\YY,\ \epsilon>0.
        \end{equation}
    \end{itemize}
\end{definition}

\section{Main Results}\label{sec:mainresults}
Given compact metric spaces $(\XX,\rho_{\XX})$ and $(\YY,\rho_{\YY})$, and a kernel $G: \XX\times\YY\to \RR$ satisfying the conditions in Definition~\ref{def:kerneldef}, we are interested in approximating certain functions in $\mathcal{V}(G)$, defined with measures $\tau$ satisfying the conditions in Definition~\ref{def:admissible}.

Theorem~\ref{thm:main} is a general theorem governing the approximation of such functions.

\begin{theorem}\label{thm:main}
Let $\alpha,\ u,\ R,\ r>0$, $Q,\ q,\ s\in\NN$ satisfy
\begin{equation}\label{eqn:cond_thm1}
    Q\geq q\geq s,\quad R\geq r,\quad q-s+2u>2R-2r.
\end{equation}
Let $(\XX,\rho_{\XX})$ be a $Q$-dimensional compact metric spaces, $\tau$ be an admissible measure on a compact metric space $(\YY,\rho_{\YY})$ with a $q$-dimensional support $T$.
Let $G:\XX\times\YY\to\RR$ be a kernel in $\mathcal{G}(\alpha,r,R,u,T)$. 

Let $f:\XX\to\RR$ be defined by
\be\label{eq:fdef}
f( x)=\int_ T G( x, y)d\tau( y),
\ee
where $\tau$ is an admissible measure.

In addition, we assume that 
\begin{equation}\label{eqn:def_Theta}
    |\tau|(\BB_ T(\Ex(G),\epsilon))\leq\Theta_{\tau,G}|\tau|_{TV}\epsilon^{q-s},\qquad x\in\XX,\ \epsilon>0.
\end{equation}
for some $\Theta_{\tau,G}<\infty$.

Let $\mfc_T$ as in \eqref{eqn:dim_set} and
\begin{equation}\label{eqn:def_a_thm1}
    a=\frac{2R-2r}{q-s+2u}\in[0,1).
\end{equation}
Then for
\begin{equation}\label{eqn:cond_main1}
    N\geq 3\mfc_T(D_R+2)(q-s)^{\frac{q}{1-a}}
\end{equation}
there exists $\{ y_1,\dots, y_N\}\subset\YY$ and numbers $a_1,\dots,a_N$ with $\sum\limits_{k=1}^N|a_k|\leq|\tau|_{TV}$ such that
\begin{equation}\label{eqn:main1}
    \left\|f-\sum\limits_{k=1}^Na_kG(\circ, y_k)\right\|_\XX\leq c_1|G|_{\mathcal{G}}|\tau|_{TV}\frac{\sqrt{1+\log N}}{N^{\frac{1}{2}+\frac{R-ua}{q}}},
\end{equation}
where
\begin{equation}\label{eqn:thm_main_constant}
 c_1=4e[\mfc_T(3D_R+6)]^{\frac{1}{2}+\frac{R-ua}{q}}\left[\left(Q\frac{q+2R-2ua}{2\alpha}+\log\mfc_\XX\right)^{1/2}\mfc_T^{-1/2}\left(\Theta_{\tau,G}+1\right)^{1/2}+1\right].   
\end{equation}

\end{theorem}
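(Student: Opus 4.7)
The strategy is a three-step reduction: first express $f$ as a finite linear combination $\sum_{j=1}^M b_j G(\cdot,y_j^*)$ of kernel evaluations with $M \gg N$, by combining a local polynomial approximation of $G(x,\cdot)$ with a quadrature based on the non-atomicity of $\tau$; then apply a Maurey--Makovoz probabilistic subsampling to cut this down to $N$ atoms; and finally convert the pointwise probabilistic bound into a uniform bound on $\XX$ using the H\"older continuity of $G$ in the first variable together with the $Q$-dimensionality of $\XX$. The free scale $\epsilon_0$ used in the discretisation will be tuned so that the enhanced $R$-smoothness on $\Delta(\Ex(G),\epsilon_0)$ wins on the bulk of $T$ while the global $r$-smoothness handles the small $|\tau|$-mass near $\Ex(G)$, and $a$ in \eqref{eqn:def_a_thm1} emerges as the optimal trade-off.

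For the first step, pick $\epsilon_0>0$ and split $T=T^{\mathrm{in}}\sqcup T^{\mathrm{out}}$, with $T^{\mathrm{in}}=\BB_T(\Ex(G),\epsilon_0)$. Cover $T^{\mathrm{out}}\subset\Delta(\Ex(G),\epsilon_0)$ by at most $\mfc_T\epsilon_0^{-q}$ balls of radius $\epsilon_0$, on each of which \eqref{eqn:G_smoothness_local} yields a polynomial $P_{B,x}\in\Pi_R$ with $\|G(x,\cdot)-P_{B,x}\|_B\leq|G|_{\Delta,R,u}\epsilon_0^{R-u}$. Using \eqref{eq:spheremeasure} and the non-atomicity of $\tau$, subdivide each ball into Borel cells of $|\tau|$-mass $\lesssim|\tau|_{TV}/M$ with $M$ of order $\mfc_T D_R\epsilon_0^{-q}$, pick a representative $y_j^*$ in each cell, and replace $\int_{\mathrm{cell}}G(x,y)\,d\tau(y)$ by $\tau(\mathrm{cell})\,G(x,y_j^*)$, absorbing the resulting error into the per-ball polynomial-approximation estimate. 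On $T^{\mathrm{in}}$, repeat the construction using the global bound \eqref{eqn:G_smoothness_global} and \eqref{eqn:def_Theta}, so that the total contribution is at most $\Theta_{\tau,G}|G|_r|\tau|_{TV}\epsilon_0^{q-s+r}$. This produces a deterministic $G$-network approximation $\sum_{j=1}^M b_jG(\cdot,y_j^*)$ of $f$ with $\sum_j|b_j|\leq|\tau|_{TV}$ and residual bounded by a constant multiple of $|G|_{\mathcal{G}}|\tau|_{TV}\bigl(\epsilon_0^{R-u}+\Theta_{\tau,G}\epsilon_0^{q-s+r}\bigr)$ uniformly in $x\in\XX$.

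For the second step, draw $k_1,\dots,k_N$ i.i.d.\ from the distribution $|b_j|/\sum_\ell|b_\ell|$ on $\{1,\dots,M\}$, and set $a_k$ for the $m$-th draw to be $\mathrm{sgn}(b_{k_m})\sum_\ell|b_\ell|/N$. At a single $x\in\XX$ a Hoeffding estimate gives a sub-Gaussian deviation of order $|G|_{\mathcal{G}}|\tau|_{TV}N^{-1/2}$. To pass to $\sup_{x\in\XX}$, use a $\delta$-net of $\XX$ of cardinality at most $\mfc_\XX\delta^{-Q}$ and the H\"older continuity \eqref{eqn:Lipschitz}, paying an extra term of order $|G|_{\XX,\alpha}\delta^\alpha|\tau|_{TV}$; a union bound over the net produces the factor $\sqrt{Q\log(1/\delta)+\log\mfc_\XX}$ appearing in \eqref{eqn:thm_main_constant}. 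Picking $\delta$ polynomially small in $N$ makes the net error negligible relative to the target rate $N^{-1/2-(R-ua)/q}\sqrt{1+\log N}$.

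The final step is to optimise $\epsilon_0$ so that the three error sources — the polynomial residual on $T^{\mathrm{out}}$, the global residual on $T^{\mathrm{in}}$, and the Maurey fluctuation — all reach the common order $N^{-1/2-(R-ua)/q}\sqrt{1+\log N}$. This forces $\epsilon_0\sim N^{-1/(q-s+2u)}$ and pins down the exponent $a=(2R-2r)/(q-s+2u)$; the lower bound \eqref{eqn:cond_main1} on $N$ is exactly what is needed to guarantee $M\lesssim N^{q/(q-s+2u)}$, so that $\log M\sim\log N$ in the union bound and $M\ge N$. The main obstacle will be the joint bookkeeping in the first step: the per-cell quadrature must simultaneously respect the polynomial-space dimension $D_R$, the uniform $|\tau|$-mass constraint, and the two smoothness regimes, while keeping the explicit constant $c_1$ in \eqref{eqn:thm_main_constant} polynomial in $Q$ and $\mfc_\XX$. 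A secondary delicacy is to ensure the local polynomials $P_{B,x}$ themselves admit a quadrature representation in terms of $G(\cdot,y_j^*)$, which is why $D_R$ must enter the count $M$ linearly, as reflected in \eqref{eqn:cond_main1}.
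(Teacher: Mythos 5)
Your proposal assembles several of the right ingredients (a partition of $T$ adapted to the two smoothness regimes, the tube condition \eqref{eqn:def_Theta}, Hoeffding plus a $\delta$-net on $\XX$ to pass from pointwise to uniform bounds, and the appearance of $D_R$ in the neuron count), but the central mechanism is wrong and the claimed rate cannot come out of it. Your architecture is: (i) build a \emph{deterministic} $M$-term network with error $E_{\det}$, then (ii) subsample globally à la Maurey down to $N$ atoms, paying an extra $O(|G|_{\mathcal{G}}|\tau|_{TV}N^{-1/2})$. Such a two-stage scheme yields $E_{\det}+CN^{-1/2}$, which is never better than $N^{-1/2}$; but the theorem asserts the exponent $\tfrac12+\tfrac{R-ua}{q}$, and one checks that $R-ua>r>0$ always holds under \eqref{eqn:cond_thm1}, so the target rate is strictly better than $N^{-1/2}$. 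The gain is multiplicative ($N^{-1/2}\cdot N^{-(R-ua)/q}$), and additive error budgets cannot produce it. A second, related flaw is in step (i) itself: replacing $\int_{\mathrm{cell}}G(x,y)\,d\tau(y)$ by $\tau(\mathrm{cell})\,G(x,y_j^*)$ with a single representative point is a quadrature exact only for constants, so its error is governed by the oscillation of $G(x,\cdot)$ on the cell, not by the $\Pi_R$-approximation error $E_R$; you cannot ``absorb'' it into the per-ball polynomial estimate.

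The paper's proof fuses the two stages into one. It partitions $T$ into $M\le 3\mfc_T\epsilon^{-q}$ closed cells $A_k$ that are simultaneously of diameter $\le\epsilon$ \emph{and} of mass $\tau(A_k)\le\mfc_T^{-1}\epsilon^q$ (Lemma \ref{lem:partition}; this double control is where non-atomicity and \eqref{eq:spheremeasure} are used). On each $A_k$, Tchakaloff's theorem provides positive quadratures exact for $\Pi_R$ supported on at most $D_R+2$ points, and the Bourgain--Lindenstrauss argument (Theorem \ref{thm:bourgaintheo}) exhibits the normalized restriction of $\tau$ to $A_k$ as a barycenter of such quadratures, i.e., furnishes a probability law $\omega_k^*$ on them. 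The random network is $\sum_k\tau(A_k)\int_{A_k}G(x,y)\,d\omega_k(y)$, and the per-cell fluctuations $\Omega_k$ are independent, mean zero, with ranges $\beta_k\le 2\tau(A_k)\min\{E_r,E_R\}(A_k;G(x,\cdot))\lesssim\epsilon^{q}\cdot\epsilon^{R}\delta^{-u}$ off the tube $\BB(\Ex,\delta)$ (and $\epsilon^{q}\epsilon^{r}$ on it, with total mass controlled by \eqref{eqn:def_Theta}). Hoeffding then gives a deviation of order $\bigl(\sum_k\beta_k^2\bigr)^{1/2}\approx\epsilon^{q/2+R-ua}$ after setting $\delta=\epsilon^a$ --- the factor $\epsilon^{q/2}\approx N^{-1/2}$ is the square-root cancellation across the $\sim N/D_R$ cells, and the factor $\epsilon^{R-ua}$ is the local polynomial-exactness gain. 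If you want to salvage your outline, you must make the randomness local (one random $\Pi_R$-exact quadrature per cell) rather than a global i.i.d. resampling of atoms; as written, your step (ii) has per-draw range of order $\|G\|_\infty|\tau|_{TV}/N$, hence variance that caps the rate at $N^{-1/2}$. Your reading of \eqref{eqn:cond_main1} is also off: in the paper it is not a bound ensuring $M\gtrsim N$, but the condition $\epsilon\le(q-s)^{-1/(1-a)}$ needed so that $(\delta+2\epsilon)^{q-s}\le e^2\delta^{q-s}$ in the tube estimate.
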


The following example illustrates the role of $\Theta_{\tau,G}$ in \eqref{eqn:def_Theta}.
\begin{example}\label{ex:musphere}
{\rm
    We consider the case when $\XX=\YY=\SS^Q$, $\rho^*$ is the geodesic distance on $\SS^Q$, and $G( \x, \by)=( \x\cdot \y)_+^\gamma$ for some $\gamma>0$. Again, we take $\Exx:= \{ \by\in\SS^Q: \x\cdot \by = 0\}$, and let $\tau=\mu^*$ be the volume measure normalized so that $\mu^*(\SS^Q)=1$. 
    We denote the volume measure of $\SS^Q$ by $\nu_Q$. Then it is verified easily that the $\mu^*$-measure of $\BB_{\SS^Q}(\Exx,\epsilon)$ satisfies
    \begin{equation*}
        \mu^*(\BB_{\SS^Q,\rho^*}(\Exx,\epsilon))\leq\frac{\nu_{Q-1}}{\nu_Q}\int_{\frac{\pi}{2}-\epsilon}^{\frac{\pi}{2}+\epsilon}\sin^{Q-1}\theta d\theta\leq2\sqrt{\frac{Q+2}{\pi}}\epsilon.
    \end{equation*}
    Thus we can take $s=Q-1$ and $\displaystyle\Theta_{\tau,G}=\Theta_{\mu^*,G}=2\sqrt{\frac{Q+2}{\pi}}$.
\qed}\end{example}

Under certain conditions, Theorem \ref{thm:main} can be improved. Specifically, we have the following result.
\begin{theorem}\label{thm:main2}
Let $\alpha,\ u,\ R,\ r>0$, $Q,\ q\in\NN$ satisfy
$$Q\geq q,\quad R\geq r,$$
Let $(\XX,\rho_{\XX})$ be a $Q$-dimensional compact metric spaces, $\tau$ be an admissible measure on a compact metric space $(\YY,\rho_{\YY})$ with a $q$-dimensional support $T$.
Let $G:\XX\times\YY\to\RR$ be a kernel in $\mathcal{G}(\alpha,r,R,0,T)$ satisfying
\begin{itemize}
    \item[(a)]$\Ex$ is either empty set or a set of only one point,
    \item[(b)]$|G|_{\Delta,R,0}<\infty$.
\end{itemize}
Then for $f$ as defined in \eqref{eq:fdef}, $\mfc_T$ as in \eqref{eqn:dim_set}, and
\begin{equation}\label{eqn:cond_main2}
    N\geq 3\mfc_T(D_R+2),
\end{equation}
there exists $\{ y_1,\dots, y_N\}\subset \YY$ and numbers $a_1,\dots,a_N$ with $\sum\limits_{k=1}^N|a_k|\leq|\tau|_{TV}$ such that
\begin{equation}\label{eqn:main2}
    \left\|f-\sum\limits_{k=1}^Na_kG(\circ, y_k)\right\|_\XX\leq c_1'|G|_{\mathcal{G}}|\tau|_{TV}\frac{\sqrt{1+\log N}}{N^{\min\left(\frac{1}{2}+\frac{R}{q},1+\frac{r}{q}\right)}}.
\end{equation}
where
\begin{equation}\label{eqn:thm_main2_constant}
 c_1'=8\mfc_T^{1+\frac{R}{q}}(3D_R+6)^{1+\frac{R}{q}}\left[\left(Q\frac{q+2R}{2\alpha}+\log\mfc_\XX\right)^{1/2}\mfc_T^{-1/2}+1\right].   
\end{equation}

\end{theorem}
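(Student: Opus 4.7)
The plan is to adapt the proof of Theorem~\ref{thm:main} to the present setting, exploiting two simplifications permitted by the extra hypotheses. First, since $\Ex(G)$ is empty or a single point and $\tau$ is non-atomic, we have $|\tau|(\Ex(G)) = 0$, which eliminates both the measure-decay parameter $s$ and the quantitative condition \eqref{eqn:def_Theta} of Theorem~\ref{thm:main}. Second, since $u = 0$, the estimate $\|G(x,\cdot)\|_{\Delta(\Ex(G),\delta),R} \le |G|_{\Delta,R,0}$ holds with a constant independent of $\delta > 0$; in effect $G(x,\cdot)$ is \emph{uniformly} $R$-smooth on $T \setminus \Ex(G)$. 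Together, these observations eliminate the $u$-correction $a = (2R-2r)/(q-s+2u)$ from Theorem~\ref{thm:main} and make the sharper rate attainable.

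First, for an auxiliary parameter $\epsilon > 0$ to be optimized later, I would construct a Borel partition $T = T_1 \sqcup \cdots \sqcup T_M$ of diameter $\le 2\epsilon$ with $M \le \mfc_T \epsilon^{-q}$ cells, using the $q$-dimensionality of $T$. In each cell $T_k$, the uniform $R$-smoothness furnishes a polynomial $P_{k,x} \in \Pi_R$ with
\[
\sup_{y \in T_k} |G(x,y) - P_{k,x}(y)| \le |G|_{\Delta,R,0}(2\epsilon)^R, \qquad x \in \XX,
\]
and no separate treatment is required for the (at most one) cell meeting $\Ex(G)$ because that cell contributes no $\tau$-mass. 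I would then invoke the polynomial quadrature developed in Section~\ref{sec:prelim} to produce, on each $T_k$, nodes $\{y_{k,j}\}_{j=1}^{D_R}$ and weights $\{w_{k,j}\}$ exact on $\Pi_R$ with $\sum_j |w_{k,j}| \le |\tau|(T_k)$. Combining these ingredients yields an initial $G$-network with at most $M D_R$ atoms,
\[
\Bigl|f(x) - \sum_{k,j} w_{k,j} G(x, y_{k,j})\Bigr| \lesssim |G|_{\Delta,R,0}\,|\tau|_{TV}\,\epsilon^R,
\]
uniformly in $x$, whose coefficients are $\ell^1$-bounded by $|\tau|_{TV}$.

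Second, I would apply the Makovoz-Maurey probabilistic subsampling to reduce this $M D_R$-atom network to an $N$-atom one while preserving the $\ell^1$ bound and losing only a factor of order $\sqrt{1+\log N}/N^{1/2+1/q}$ in the sup-norm error. Makovoz's covering-number refinement of Maurey's lemma requires bounding the $\eta$-covering number of the dictionary $\{G(\cdot,y):y\in T\} \subset C(\XX)$, which is controlled by combining $\mfc_T$ with the H\"older modulus $|G|_{\XX,\alpha}$ and $\mfc_\XX$; this is exactly what produces the factor $(Q(q+2R)/(2\alpha)+\log\mfc_\XX)^{1/2}$ in \eqref{eqn:thm_main2_constant}. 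Optimizing $\epsilon$ against $N$ in the composite error $\epsilon^R + \sqrt{\log N}/N^{1/2+1/q}$ produces the exponent $1/2 + R/q$; rerunning the same steps with $\Pi_r$ and $|G|_r$ (globally valid) in place of $\Pi_R$ and $|G|_{\Delta,R,0}$ gives the alternative exponent $1 + r/q$, the two bounds are combined by taking the minimum, and the constraint $N \ge 3\mfc_T(D_R+2)$ is exactly what ensures the partition parameter $\epsilon$ is admissible so that subsampling is effective.

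The principal difficulty is the covering-number estimate for the dictionary in $C(\XX)$. Only H\"older continuity of $G$ in its \emph{first} argument is available to bound $\|G(\cdot,y_1)-G(\cdot,y_2)\|_\XX$ in the sup-norm, while smoothness in the second argument is local; achieving a tractable constant means balancing $\mfc_T$ and $\mfc_\XX$ carefully so that the resulting exponent in \eqref{eqn:thm_main2_constant} remains polynomial in $Q$. I also expect some care is needed in handling $\tau$ near $\Ex(G)$, since while $|\tau|(\Ex(G))=0$, the rate of decay of $|\tau|$ on neighborhoods of $\Ex(G)$ is not quantified by the hypotheses; refining the exceptional cell without spoiling the atom count $M D_R$ must be done via the non-atomicity of $\tau$ alone, and this is where the improvement over Theorem~\ref{thm:main} ultimately comes from.
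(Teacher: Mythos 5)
Your high-level ingredients (a partition of $T$, local quadrature exact on $\Pi_R$, probabilistic subsampling, and a union bound over an $\varepsilon$-net of $\XX$ via the H\"older continuity in $x$) match the paper's, but the quantitative core of the argument is missing, and two of your claims are wrong in ways that matter.

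First, the rate. Your two-stage accounting --- a deterministic $MD_R$-atom network with error $\epsilon^R$, followed by a global subsampling loss of order $\sqrt{\log N}/N^{1/2+1/q}$ --- cannot produce the exponent $\tfrac12+\tfrac Rq$: minimizing $\epsilon^R + cN^{-1/2-1/q}$ over $\epsilon$ just gives $cN^{-1/2-1/q}$, and under the forced constraint $MD_R\lesssim N$ (so $\epsilon\gtrsim N^{-1/q}$) it gives $N^{-\min(R/q,\,1/2+1/q)}$. The actual mechanism in the paper is that the smoothness gain and the square-root cancellation multiply inside a single concentration step: on each cell $A_k$ the random quadrature error has range $\beta_k\le 2\tau(A_k)\epsilon^R$, so the Hoeffding variance is $\sum_k\beta_k^2\le 4\left(\max_k\tau(A_k)\right)\epsilon^{2R}\le 4\mfc_T^{-1}\epsilon^{q+2R}$ and the resulting error is $\epsilon^{q/2+R}\sqrt{\log(1/\epsilon)}\sim N^{-1/2-R/q}\sqrt{\log N}$. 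For this one indispensably needs the upper bound $\tau(A_k)\le\mfc_T^{-1}\epsilon^q$ on every cell, which a partition by diameter alone does not provide; securing it is exactly the purpose of Lemma~\ref{lem:partition} and Lemma~\ref{lem:divideY} (subdividing heavy cells using non-atomicity), and it is absent from your construction.

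Second, the exceptional cell and the minimum. Your claim that the cell meeting $\Ex$ ``contributes no $\tau$-mass'' is false: only the point itself is $\tau$-null, while the cell containing it carries mass up to $\mfc_T^{-1}\epsilon^q$, and on that cell only the global $r$-smoothness is available. Its contribution to the variance is at most $4\mfc_T^{-2}\epsilon^{2q+2r}$, i.e.\ an error term of order $\epsilon^{q+r}\sim N^{-1-r/q}$; this single term is the source of the $1+\tfrac rq$ branch of the minimum in \eqref{eqn:main2}. Your proposed substitute --- rerunning the whole argument with $\Pi_r$ and $|G|_r$ in place of $\Pi_R$ and $|G|_{\Delta,R,0}$ --- would only yield the exponent $\tfrac12+\tfrac rq$, which is weaker than both branches of the stated minimum. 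So as written the proposal neither reaches the claimed rate nor correctly identifies where $\min\left(\tfrac12+\tfrac Rq,\,1+\tfrac rq\right)$ comes from.
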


In the next section, we will apply Theorem \ref{thm:main} and Theorem \ref{thm:main2} to various examples. As we have discussed in Remark \ref{rem:cover_dist}, we will take $\rho_\YY$ such that $\mathrm{diam}(\YY)=2$, for which $\mfc_T$ become tractable constants in these examples.

\section{Examples}\label{sec:examples}
The purpose of this section is to illustrate the general Theorems~\ref{thm:main} and \ref{thm:main2}. In Section~\ref{bhag:relu}, we consider power ReLU functions.
Positive definite zonal function networks as in \cite{zfquadpap} are discussed in Section~\ref{bhag:zonal}.
Approximation on the unit ball by Laplace networks is discussed in Section~\ref{bhag:laplace}.

\subsection{Approximation by ReLU networks}\label{bhag:relu}

In Examples \ref{ex:sphere}, \ref{ex:relu}, \ref{ex:musphere} and Proposition \ref{prop:coversphere}, we have studied some properties of the spheres and ReLU$^\gamma$ functions. Our results in Section \ref{sec:mainresults} can be applied in these settings to get the approximation rates and tractable constants.
\begin{theorem}\label{thm:ReLUr1}
Let $\gamma\geq1$, $Q\geq q\geq2$ be integers, and $\kappa_\SS$ be the absolute constant in \eqref{eqn:coveringnumber_spheres}. We can identify the unit ball of dimension $q$ as
\begin{equation}\label{eqn:Sq_in_SQ}
\SS^q:=\left\{\x\in\RR^{Q+1}:\ \sum\limits_{j=1}^{q+1}|x_j|^2=1\hbox{ and } x_j=0 \hbox{ for } j> q+1\right\}\subset\SS^Q
\end{equation}
where $\SS^Q$ is the unit ball of dimension $Q$. Let  $\rho^*$ be the geodesic distance on $\SS^q$, $\tau$ be an admissible measure with $\supp(\tau)=\SS^q$ on $(\SS^q,\rho^*)$. We assume that there exists $\Xi_\tau<\infty$ such that
\begin{equation}\label{eqn:cond_Ex_sphere}
    |\tau|(\BB_{\SS^q,\rho^*}(\y,\delta))\leq\Xi_\tau|\tau|_{TV}\mu^*(\BB_{\SS^q,\rho^*}(\y,\delta)),\quad \y\in\SS^q,\ \delta>0,
\end{equation}
where $\mu^*$ is the volume measure normalized so that $\mu^*(\SS^q)=1$.

Let $G:\SS^Q\times\SS^q\to\RR$ be the $\mathrm{ReLU}^\gamma$ function
$$G(\x,\by)=(\x\cdot\by)_+^\gamma,$$
and $f$ is a function on $\SS^q$ denoted by
$$f(\x)=\int_{\SS^q}G(\x,\by)d\tau(\by),\quad\x\in\SS^q.$$
Then for any $R\geq \gamma$ and $N\geq\kappa_\SS q^{3/2}(3(q+1)^{\gamma+1}+6)\log q$, there exists $\{\by_1,\dots,\by_N\}\subset\SS^q$ and numbers $a_1,\dots,a_N$, such that
\begin{equation}\label{eqn:rateReLUr1}
    \left\|f-\sum\limits_{k=1}^Na_kG(\circ,\by_k)\right\|_{\SS^Q}\leq c_2|\tau|_{TV}\left\{\begin{array}{ll}\displaystyle\frac{\sqrt{1+\log N}}{N^{\frac{1}{2}+\frac{\gamma}{q}+\frac{\lambda}{2q}}},\quad&\hbox{if }q=Q,\\[3ex]
\displaystyle\frac{\sqrt{1+\log N}}{N^{\frac{1}{2}+\frac{\gamma}{q}}},\quad&\hbox{if }q<Q,\\
\end{array}\right.
\end{equation}
where $\displaystyle\lambda=\frac{2R-2\gamma}{2R-2\gamma+1}$ and
\begin{equation}
c_2=
\left\{
\begin{aligned}
\displaystyle &16\sqrt{\pi}e\kappa_\SS (2\pi)^R\left[\kappa_\SS q^{3/2}(3(q+1)^{R+1}+6)\log q\right]^{\frac{1}{2}+\frac{R}{q}}\left[(q+2R+\log(\kappa_\SS q^2))(6\Xi_\tau)^{1/2}+1\right],&\\
&&\mbox{if }q=Q,\\[2ex]
\displaystyle &16\kappa_\SS \pi^\gamma\left[\kappa_\SS q^{3/2}(3(q+1)^\gamma+6)\log q\right]^{1+\frac{\gamma}{q}}\left[\left(\frac{Q(q+2\gamma)}{2}+\log(\kappa_\SS Q^2)\right)^{1/2}\!\!(\kappa_\SS q^{3/2}\log q)^{-1/2}+1\right],& \\
&&\mbox{if }q<Q.\\
\end{aligned}\right.
\end{equation}
\end{theorem}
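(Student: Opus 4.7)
The plan is to cast $G(\x,\by)=(\x\cdot\by)_+^\gamma$ as a kernel in the class $\mathcal{G}(\alpha,r,R,u,T)$ required by Theorem~\ref{thm:main} (in the case $q=Q$) and by Theorem~\ref{thm:main2} (in the case $q<Q$), to verify the measure condition \eqref{eqn:def_Theta} where it is needed, and then to read off the conclusion. A first technical step is to rescale the geodesic distance on both $\SS^Q$ and $\SS^q$ by $2/\pi$, so that $\mathrm{diam}(\YY)=2$ as in Remark~\ref{rem:cover_dist}; Proposition~\ref{prop:coversphere} then gives $\mfc_{\SS^q,\rho}\le\kappa_\SS q^{3/2}\log q$ and $\mfc_{\SS^Q,\rho}\le\kappa_\SS Q^{3/2}\log Q$, and introduces factors of $\pi/2$ to various powers in the seminorms $|G|_{\XX,\alpha}$, $|G|_r$, $|G|_{\Delta,R,u}$ that must be carried through into the final $c_2$.

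The verification that $G\in\mathcal{G}$ has three ingredients. H\"older continuity in $\x$ with $\alpha=1$ follows from the elementary bound $|t_+^\gamma-s_+^\gamma|\le\gamma|t-s|$ (valid since $|t|,|s|\le 1$ and $\gamma\ge 1$) combined with the chord--arc estimate $|\x-\bz|\le\rho^*(\x,\bz)$. Global $\gamma$-smoothness in $\by$ comes from a Jackson-type spherical-polynomial bound that handles the H\"older-of-order-$\gamma$ singularity of $t_+^\gamma$ near $t=0$, giving an explicit bound on $|G|_\gamma$. For the smoothness-in-the-large ingredient used in the $q=Q$ case I take $\Exx=\{\by\in\SS^q:\x\cdot\by=0\}$; on any ball lying inside $\Delta(\Exx,\delta)$ the inner product $|\x\cdot\by|$ stays above a constant multiple of $\delta$ (since the geodesic distance from the equator is essentially $\arcsin|\x\cdot\by|$), and then the chain rule together with the $t^{\gamma-R}$ blow-up of the $R$-th derivative of $t^\gamma$ gives $|G|_{\Delta,R,R-\gamma}<\infty$ with the explicit constant indicated in Example~\ref{ex:relu}. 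For the $q<Q$ case I instead take $\Exx=\emptyset$, which trivially satisfies condition (a) of Theorem~\ref{thm:main2}; then $\Delta(\Exx,\delta)=\SS^q$ for every $\delta$, and $|G|_{\Delta,R,0}$ coincides with the global norm $|G|_R$, which is finite for $R=r=\gamma$.

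With these ingredients the theorems apply directly. For $q=Q$, Theorem~\ref{thm:main} with $s=q-1$ and $u=R-\gamma$ gives $a=(2R-2\gamma)/(2R-2\gamma+1)=\lambda$ and $R-ua=\gamma+\lambda/2$, so the rate in \eqref{eqn:main1} becomes $N^{-\tfrac12-\tfrac\gamma q-\tfrac\lambda{2q}}$. The measure constant $\Theta_{\tau,G}\le\Xi_\tau\sqrt{\pi(q+2)}$ is obtained by combining \eqref{eqn:cond_Ex_sphere} with the spherical-cap bound of Example~\ref{ex:musphere}, adjusted for the metric rescaling. The threshold \eqref{eqn:cond_main1} simplifies because $(q-s)^{q/(1-a)}=1^{q/(1-a)}=1$, reducing to $N\ge 3\mfc_T(D_R+2)$, which one bounds by $\kappa_\SS q^{3/2}(3(q+1)^{R+1}+6)\log q$ using a crude estimate on the dimension of spherical polynomials. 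For $q<Q$, Theorem~\ref{thm:main2} with $R=r=\gamma$ produces the rate $N^{-\min(1/2+\gamma/q,\,1+\gamma/q)}=N^{-1/2-\gamma/q}$; no $\Theta$-condition is invoked, which explains the absence of $\Xi_\tau$ in the corresponding $c_2$. Substituting the explicit values of the kernel seminorms, $\mfc_T$, $\mfc_\XX$, and (in the first case) $\Theta_{\tau,G}$ into \eqref{eqn:thm_main_constant} and \eqref{eqn:thm_main2_constant} and simplifying yields the two stated formulas for $c_2$.

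The chief technical obstacle is the smoothness-in-the-large estimate in the $q=Q$ case: one must control the $R$-th order local polynomial approximation error of $\by\mapsto(\x\cdot\by)^\gamma$ on arbitrary small balls sitting inside $\Delta(\Exx,\delta)$, uniformly in $\x\in\SS^Q$ and in the ball's position. This combines the geometric lower bound $|\x\cdot\by|\gtrsim\delta$ with the chain-rule $\delta^{\gamma-R}$ blow-up of the $R$-th derivative of $t\mapsto t^\gamma$; it is precisely the calculation that the paper defers to Section~\ref{subsec:q=Q,gammanotN}. A secondary but tedious obstacle is propagating the rescaling factor $\pi/2$ and the Jackson constants through \eqref{eqn:thm_main_constant} and \eqref{eqn:thm_main2_constant} so that the final $c_2$ emerges in the stated form with the correct powers of $\pi$, $\kappa_\SS$, and $(q+1)$.
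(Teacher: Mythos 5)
Your proposal follows the paper's proof essentially step for step: the same rescaling of the geodesic metric by $2/\pi$, the same verification that $G\in\mathcal{G}$ with $\Exx$ the equator for $q=Q$ and $\Exx=\varnothing$ for $q<Q$, the same parameters $s=q-1$, $u=R-\gamma$, $a=\lambda$, $R-ua=\gamma+\lambda/2$, and the same division of labor between Theorem~\ref{thm:main} and Theorem~\ref{thm:main2}.

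The one step that does not go through as written is the bound on $\Theta_{\tau,G}$. The hypothesis \eqref{eqn:cond_Ex_sphere} compares $|\tau|$ with $\mu^*$ only on \emph{balls}, whereas \eqref{eqn:def_Theta} requires a bound on the tube $\BB(\Exx,\epsilon)$ around the equator, which is not a ball; ``combining \eqref{eqn:cond_Ex_sphere} with the spherical-cap bound of Example~\ref{ex:musphere}'' therefore needs an intermediate argument that you do not supply. The paper's route (Step~6 of Section~\ref{subsec:q=Q,gammanotN}) is to cover the tube by those balls of Proposition~\ref{prop:coversphere} that meet $\BB(\Exx,\epsilon)$, count them via the bounded-overlap property, and apply \eqref{eqn:cond_Ex_sphere} to each; this costs a factor of order $\kappa_\SS q\log q$ and yields $\Theta_{\tau,G}\le 3\pi\Xi_\tau\kappa_\SS q^{3/2}\log q$, which, after dividing by $\mfc_T^{1/2}=(\kappa_\SS q^{3/2}\log q)^{1/2}$ in \eqref{eqn:thm_main_constant}, is exactly what produces the $(6\Xi_\tau)^{1/2}$ in the stated $c_2$. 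Your claimed $\Theta_{\tau,G}\le\Xi_\tau\sqrt{\pi(q+2)}$ is in fact true, but only via a Besicovitch--Lebesgue differentiation argument showing that the ball-wise domination forces $|\tau|\le\Xi_\tau|\tau|_{TV}\mu^*$ as measures (the singular part must vanish); if you intend that route you must say so, and you will then land on a smaller constant than the stated $c_2$, which of course still proves the theorem. Everything else --- the rate exponents, the threshold on $N$, and the $q<Q$ case --- is correct as proposed.
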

\begin{remark}\label{rem:Xi_Theta}
    {\rm We note that the condition \eqref{eqn:cond_Ex_sphere} leads to a bound of the term $\Theta_{\tau,G}$ in \eqref{eqn:def_Theta} (cf. \eqref{eqn:thetaGest_thm4.1}), after changing the metric from geodesic distance to a multiple of this distance. We feel that \eqref{eqn:cond_Ex_sphere} is more natural and understandable than a condition on $\BB_{\SS^q}(\Ex,\epsilon)$.\qed}
\end{remark}

\begin{remark}\label{rem:dimdifference}
    {\rm The difference in the estimates in the cases $Q=q$ and $Q>q$ is caused by $\Exx$. For $Q=q$, we can take $\Exx$ be the equator $\Exx=\{\by\in\SS^q:\ \x\cdot\by=0\}$ and the function $G$ is arbitrarily smooth outside $\Exx$. 
    Naturally, the measure of the tube $\BB_{\SS^q}(\Exx,\epsilon)$, under proper conditions, can be estimated as $O(\epsilon^{q-1})$. 
    So, we can apply Theorem \ref{thm:main} with $R>r$ and $s=1$.
    
    However, if $Q>q$, let $\x=(0,\dots,0,1)\in\SS^Q$, this set is given as $\{\by\in\SS^q:\ 0y_1+\dots+0y_q=0\}=\SS^q$. This means we have to take $\Exx=\varnothing$ and take $R=r$ globally.\qed}
\end{remark}
We can improve upon \eqref{eqn:rateReLUr1} in the case when $\gamma$ is an integer and $q=Q$.
\begin{theorem}\label{thm:ReLUr2}
Under the conditions in Theorem \ref{thm:ReLUr1}, if $\gamma$ is an integer and $Q=q$, we have
\begin{equation}\label{eqn:rateReLUr2}
     \left\|f-\sum\limits_{k=1}^Na_kG(\circ,\by_k)\right\|_{\SS^Q}\leq c_2'|\tau|_{TV}\displaystyle\frac{\sqrt{1+\log N}}{N^{\frac{1}{2}+\frac{2\gamma+1}{2q}}},
\end{equation}
where
\begin{equation}\label{eqn:integer_gamma_const}
c_2'=32\sqrt{\pi}e\kappa_\SS(2\pi)^\gamma\left[\kappa_\SS q^{3/2}(3(q+1)^\gamma+6)\log q\right]^{1+\frac{\gamma+1}{q}}\left[(q+\gamma+1+\log(\kappa_\SS q^2))(6\Xi_\tau)^{1/2}+1\right].
\end{equation}

\end{theorem}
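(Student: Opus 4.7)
The plan is to apply Theorem~\ref{thm:main} in the setting $\XX=\YY=\SS^q$ with a rescaled geodesic metric (e.g.\ $\rho_1=(2/\pi)\rho^*$, so that $\mathsf{diam}(\SS^q)=2$ and, by Example~\ref{ex:sphere}, $\mfc_{\SS^q,\rho_1}\le \kappa_\SS q^{3/2}\log q$), taking the exceptional set to be the equator $\Exx=\{\by\in\SS^q:\x\cdot\by=0\}$. Combining the hypothesis \eqref{eqn:cond_Ex_sphere} with the tube estimate $\mu^*(\BB_{\SS^q}(\Exx,\epsilon))\le C\sqrt{q}\,\epsilon$ arising as in Example~\ref{ex:musphere} gives \eqref{eqn:def_Theta} with $s=q-1$ and a tractable constant $\Theta_{\tau,G}$. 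The H\"older exponent $\alpha=1$ and the global smoothness exponent $r=\gamma$ are immediate.

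The decisive new ingredient, distinguishing this result from the non-integer case treated in Theorem~\ref{thm:ReLUr1}, is that for integer $\gamma$ the kernel $G(\x,\cdot)=(\x\cdot\by)_+^\gamma$ restricts to a spherical polynomial of degree $\gamma$ on each open hemisphere (and vanishes on the other). Hence, for any integer $R\ge\gamma+1$ and any subset $A\subset\SS^q$ contained in a single hemisphere, $\|G(\x,\cdot)\|_{A,R}=0$. The two components of $\Delta(\Exx,\delta)$ are separated by geodesic distance at least $2\delta$, so every ball inside $\Delta(\Exx,\delta)$ of radius $\delta'<2\delta$ lies in one hemisphere and contributes $0$ to the $R$-smoothness seminorm, while balls of radius $\delta'\ge 2\delta$ contribute at most $E_R/(\delta')^R\le (2\delta)^{-R}$ via the trivial bound $E_R\le\|G\|_\infty\le 1$. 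Therefore $\|G(\x,\cdot)\|_{\Delta(\Exx,\delta),R}\le(2\delta)^{-R}$, and choosing $u=R$ yields $|G|_{\Delta,R,R}\le 2^{-R}$.

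I would then invoke Theorem~\ref{thm:main} with $r=\gamma$, $s=q-1$, $u=R$; the admissibility constraint $q-s+2u>2(R-r)$ reduces to $1+2R>2(R-\gamma)$, which holds for every $R\ge\gamma$. A short computation gives $a=(2R-2\gamma)/(2R+1)$ and $R-ua=R(2\gamma+1)/(2R+1)$, so the rate exponent is $\tfrac12+R(2\gamma+1)/(q(2R+1))$, tending to the stated $\tfrac12+(2\gamma+1)/(2q)$ as $R\to\infty$. Since $|G|_{\mathcal G}\le\max(|G|_{\XX,1},|G|_\gamma)$ is bounded independently of $R$ (the term $2^{-R}$ decays), the only apparent obstruction to pushing $R$ upward is the factor $D_R\sim R^q$ in the constant $c_1$ of Theorem~\ref{thm:main}.

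The main obstacle, and where the proof of Theorem~\ref{thm:ReLUr2} requires careful bookkeeping beyond a literal application of Theorem~\ref{thm:main}, is precisely this $D_R$ blow-up: attaining the limiting rate exponent seems to force $R$ to grow, yet the target constant $c_2'$ in \eqref{eqn:integer_gamma_const} depends on $(q+1)^\gamma$ with exponent $1+(\gamma+1)/q$ and is entirely independent of $R$. The resolution to be justified is that because $G(\x,\cdot)$ is globally polynomial of degree $\gamma$ on each hemisphere, the inner polynomial-approximation step in the proof of Theorem~\ref{thm:main} can be carried out in the fixed space $\Pi_{\gamma+1}$ (of dimension $\sim(q+1)^\gamma$) rather than in a growing $\Pi_R$, so the generic $D_R$ is effectively replaced by $D_{\gamma+1}$. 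This decoupling of the large $R$ that drives the rate exponent from the fixed degree $\gamma$ that drives the approximating space produces the rate $N^{-1/2-(2\gamma+1)/(2q)}$ and the constant \eqref{eqn:integer_gamma_const}.
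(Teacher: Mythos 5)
Your proposal is correct and follows essentially the same route as the paper: exceptional set equal to the equator with $s=q-1$, exact polynomial reproduction of degree $\gamma$ on each hemisphere so that the $R$-smoothness seminorm away from the equator (essentially) vanishes, application of Theorem~\ref{thm:main}, and then letting $R\to\infty$ while absorbing the residual $N^{-\lambda}$ loss into a factor of $2$. The ``careful bookkeeping beyond a literal application'' you anticipate is in fact unnecessary: since the nested sequence $\{\Pi_k\}$ underlying Definition~\ref{def:locsmoothness} is arbitrary, the paper simply freezes $\Pi_R:=\Pi_\gamma$ for all $R\ge\gamma$, so $D_R\le(q+1)^\gamma$ is fixed and Theorem~\ref{thm:main} applies verbatim, while the exponent $R-ua$ (the paper takes $u=R-\gamma$ with $|G|_{\Delta,R,u}=0$ rather than your $u=R$ with $|G|_{\Delta,R,R}\le 2^{-R}$, both yielding the same limit $\gamma+\tfrac12$) is driven up by $R$.
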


\begin{remark}\label{rem:reluabs}{\rm  Motivated by a direct comparison with the constructive results in \cite{sphrelu}, we have considered in  \cite{mhaskar2020dimension} kernels of the form $G(\x,\by)=|\x\cdot\by|^\gamma$, $\x\in\SS^Q$, $\y\in\SS^q$. It is clear that  when $\gamma$ is an integer, $|\x\cdot\by|^\gamma=(\x\cdot\by)_+^\gamma+(-\x\cdot\by)_+^\gamma$. So the approximation rate we get here is the same as the rate in \cite{mhaskar2020dimension}. This means we are not losing the approximation rate in exchange for the tractability. \qed}
\end{remark}

\subsection{Approximation by certain zonal function networks}\label{bhag:zonal}
\begin{theorem}\label{thm:zonal}
Let $Q\geq q\geq2$ be integers, $\gamma>0$ be not an integer, and  $\kappa_\SS $ be the absolute constant in \eqref{eqn:coveringnumber_spheres}. Let $\SS^Q$ the unit ball in $\RR^{Q+1}$, and we identify $\SS^q$ as \eqref{eqn:Sq_in_SQ}. Let  $\rho^*$ be the geodesic distance on $\SS^q$ and $\tau$ be an admissible measure on $(\SS^q,\rho^*)$ with $\mathsf{supp}(\tau)=\SS^q$.

Let $G:\SS^Q\times\SS^q\to\RR$ be the zonal function
$$G(\x,\by)=(1-\x\cdot\by)^\gamma,$$
and $f$ is a function on $\SS^Q$ denoted by
$$f(\x)=\int_{\SS^q}G(\x,\by)d\tau(\by),\quad\x\in\SS^Q.$$
Then for any $N\geq\kappa_\SS q^{3/2}\log q(3(q+1)^\gamma+6)$, there exists $\{\by_1,\dots,\by_N\}\subset\SS^q$ and numbers $a_1,\dots,a_N$, such that
\begin{equation}\label{eqn:ratezonal}
    \left\|f-\sum\limits_{k=1}^Na_kG(\circ,\by_k)\right\|_{\SS^Q}\leq c_3|\tau|_{TV}\left(\frac{1+\log N}{N^{1+4\gamma/q}}\right)^{1/2},
\end{equation}
where
\begin{equation}\label{eqn:thm_zonal_const}
c_3=8\kappa_\SS \pi^{2\gamma}\left[\kappa_\SS q^{3/2}(3(q+1)^{2\gamma+1}+6)\log q\right]^{1+\frac{2\gamma}{q}}\left[\left(\frac{Q(q+2\gamma)}{2}+\log(\kappa_\SS Q^2)\right)^{1/2}\left(\kappa_\SS q^{3/2}\log q\right)^{-\frac{1}{2}}+1\right].
\end{equation}
\end{theorem}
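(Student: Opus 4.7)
The plan is to apply Theorem \ref{thm:main2} to the zonal kernel $G(\x,\by) = (1-\x\cdot\by)^\gamma$. The observation to exploit is that for $\gamma$ non-integer, the only point of non-smoothness of $G(\x,\cdot)$ on $\SS^q$ is $\by = \x$ (when $\x \in \SS^q$), and the local behavior there is precisely of order $\rho^*(\x,\by)^{2\gamma}$; this gives a global smoothness of order exactly $2\gamma$ and matches the ``in the large'' smoothness at the same critical exponent, allowing us to take $u = 0$ and hence fall into the regime of Theorem \ref{thm:main2} rather than the weaker Theorem \ref{thm:main}.

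First I would follow Remark \ref{rem:cover_dist} and replace the geodesic distance $\rho^*$ on both $\SS^q$ and $\SS^Q$ by the scaled metric $\rho_1 = (2/\pi)\rho^*$, which normalizes $\mathsf{diam}(\SS^q) = 2$ and, by Proposition \ref{prop:coversphere} as applied in Example \ref{ex:sphere}, yields the tractable bounds $\mfc_{\SS^q, \rho_1} \le \kappa_\SS q^{3/2}\log q$ and similarly for $\mfc_{\SS^Q, \rho_1}$.

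Second, I would verify that $G \in \mathcal{G}(\alpha, 2\gamma, 2\gamma, 0, \SS^q)$ in the sense of Definition \ref{def:kerneldef}. The H\"older estimate $|G|_{\XX,\alpha}$ in $\x$ reduces to the one-variable estimate on $t \mapsto (1-t)^\gamma$, giving $\alpha = \min(\gamma, 1)$. I would choose $\Exx = \{\x\}$ if $\x \in \SS^q$ and $\Exx = \varnothing$ otherwise, so condition (a) of Theorem \ref{thm:main2} holds. The global smoothness $|G|_{2\gamma} < \infty$ follows from the identity $G(\x,\by) = 2^\gamma \sin^{2\gamma}(\rho^*(\x,\by)/2)$, which is $C^\infty$ away from $\by = \x$ and whose local behavior there is precisely of order $2\gamma$; polynomial approximation on spherical caps of radius $\delta$ then achieves error $O(\delta^{2\gamma})$. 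Condition (b) is then immediate: restricting the domain to $\Delta(\Exx,\delta) \subset \SS^q$ cannot increase the polynomial approximation error, so $|G|_{\Delta, 2\gamma, 0} \le |G|_{2\gamma}$ uniformly in $\delta$.

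Third, I would invoke Theorem \ref{thm:main2} with $R = r = 2\gamma$. The minimum in the exponent is $\min(1/2+R/q,\,1+r/q) = 1/2 + 2\gamma/q$, yielding the rate $N^{-1/2-2\gamma/q}$ together with the $\sqrt{1+\log N}$ factor. The constant $c_3$ emerges from substituting into \eqref{eqn:thm_main2_constant}: $\mfc_T \le \kappa_\SS q^{3/2}\log q$, the dimension bound $D_R \le (q+1)^{2\gamma+1}$ via $\binom{q+k}{q}\le(q+1)^k$, and $\log\mfc_\XX \le \log(\kappa_\SS Q^2)$, together with an explicit bound on $|G|_\mathcal{G}$ that absorbs the factor $(\pi/2)^{2\gamma}$ from the metric rescaling. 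The main obstacle is the precise estimation of the seminorm $|G|_{2\gamma}$ in the sense of Definition \ref{def:locsmoothness}, i.e., bounding $E_{2\gamma}(\BB(\by_0,\delta); G(\x,\cdot))/\delta^{2\gamma}$ uniformly in $\x, \by_0, \delta$: spherical caps far from $\by = \x$ are handled by the $C^\infty$ smoothness of $G$ there (giving arbitrary polynomial approximation rates), while caps close to or containing $\by = \x$ require a direct comparison with the model function $\theta \mapsto \theta^{2\gamma}$, and combining these two regimes while tracking the metric rescaling without losing tractability is the key technical task.
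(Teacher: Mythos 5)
Your proposal is correct and follows essentially the same route as the paper: rescale the geodesic metrics per Remark~\ref{rem:cover_dist} and Example~\ref{ex:sphere} to get tractable $\mfc_T\le\kappa_\SS q^{3/2}\log q$ and $\mfc_\XX\le\kappa_\SS Q^2$, verify via the one-dimensional reduction (equivalently the identity $1-\x\cdot\by=2\sin^2(\rho^*(\x,\by)/2)$) that $G$ is $2\gamma$-smooth with $\Exx$ a single point (or empty), take $R=r=2\gamma$, $u=0$, and apply Theorem~\ref{thm:main2}, whose $\min(\frac12+\frac Rq,1+\frac rq)=\frac12+\frac{2\gamma}{q}$ gives \eqref{eqn:ratezonal}. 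The only (immaterial) deviation is your H\"older exponent $\alpha=\min(\gamma,1)$ where the paper uses $\alpha=1$ (the natural exponent in the geodesic metric is $\min(2\gamma,1)$); this affects only the harmless factor $Q\frac{q+2R}{2\alpha}$ in the constant.
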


\subsection{ Approximation on the unit ball by radial basis function networks}\label{bhag:laplace}
\begin{theorem}\label{thm:Gaussian}
Let $q,Q\in\NN$, $q\leq Q$, denote the unit balls
\begin{equation*}
    \begin{split}
        B^q:=\left\{\x\in\RR^Q:\ \sum\limits_{j=1}^q|x_j|^2\leq1,\ x_{q+1}=\dots=x_Q=0\right\},\quad B^Q:=\left\{\x\in\RR^Q:\ \sum\limits_{j=1}^Q|x_j|^2\leq1\right\}.
    \end{split}
\end{equation*}
Let $\tau$ be an admissible measure on
$(B^q,|\cdot|)$ with $\mathsf{supp}(\tau)=B^q$. Let $G:B^Q\times B^q\to\RR$ be the Laplace function
$$G(\x,\by)=\exp\left(-|\x-\by|\right):=\exp\left(-\sqrt{\sum\limits_{j=1}^q|x_j-y_j|^2+\sum\limits_{j=q+1}^Q|x_j|^2}\right),$$
and $f$ is a function on $B^Q$ denoted by
$$f(\x)=\int_{B^q}G(\x,\by)d\tau(\by),\quad\x\in B^Q.$$
Then for any $N\geq 3q+9$, there exists $\{\by_1,\dots,\by_N\}\subset\SS^q$ and numbers $a_1,\dots,a_N$, such that
\begin{equation}\label{eqn:ratelaplace}
    \left\|f-\sum\limits_{k=1}^Na_kG(\circ,\by_k)\right\|_{B^Q}\leq c_4|\tau|_{TV}\left(\frac{1+\log N}{N^{1+2/q}}\right)^{1/2},
\end{equation}
where
\begin{equation}\label{eqn:thm_Gaussian_const}
 c_4=8\left[(\kappa_Bq^{3/2}\log q)(3q+9)\right]^{1+\frac{1}{q}}\left[\left(\frac{q+2}{2}Q+2\log Q+\log\kappa_B\right)^{1/2}\left(\kappa_Bq^{3/2}\log q\right)^{-1/2}+1\right]   
\end{equation}
and $\kappa_B$ is an absolute constant.

\end{theorem}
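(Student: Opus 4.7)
The plan is to apply Theorem~\ref{thm:main2} to the Laplace kernel $G(\x,\y)=\exp(-|\x-\y|)$ with parameters $\alpha=r=R=1$ and $u=0$, taking $T=B^q$. I would choose the approximating subspaces $\Pi_k$ to be polynomials of total degree at most $k$ on $B^q$, so that $D_R=D_1=q+1$ and $3D_R+6=3q+9$, which matches the factor appearing in $c_4$.

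To verify membership $G\in\mathcal{G}(1,1,1,0,B^q)$, sketched in Example~\ref{uda:laplacekern}, I would use that $t\mapsto e^{-t}$ is $1$-Lipschitz on $[0,\infty)$ together with the triangle inequality to get $|G(\x_1,\y)-G(\x_2,\y)|\le|\x_1-\x_2|$ and $|G(\x,\y_1)-G(\x,\y_2)|\le|\y_1-\y_2|$. This gives $|G|_{\XX,1}\le 1$ and global $1$-smoothness $|G|_1\le 1$ (a Lipschitz function with constant $L$ can be approximated by a constant on a ball of radius $\delta$ with error $\le L\delta$, so $E_1\le L\delta$). Taking $\Exx=\{\x\}$ realizes condition (a) of Theorem~\ref{thm:main2}; condition (b) is immediate since $\|G(\x,\cdot)\|_{\Delta(\{\x\},\delta),1}\le |G|_1\le 1$ uniformly in $\delta$, so $|G|_{\Delta,1,0}\le 1$. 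Hence $|G|_{\mathcal{G}}\le 1$.

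Next I would record a tractable covering bound $\mfc_{B^q}\le\kappa_B q^{3/2}\log q$ (and the analogous bound for $B^Q$) for some absolute constant $\kappa_B\ge 1$. This is the one nonroutine ingredient; it can be obtained by an argument analogous to Proposition~\ref{prop:coversphere}, covering $B^q$ by combining a B\"or\"oczky--Wintsche covering of scaled copies of $\SS^{q-1}$ with a uniform discretization in the radial direction. In particular $\log\mfc_{\XX}\le\log\kappa_B+2\log Q$ for $Q\ge 2$. Applying Theorem~\ref{thm:main2} with $r=R=1$ gives $\min(1/2+R/q,1+r/q)=1/2+1/q$, so \eqref{eqn:main2} yields $c_1'|\tau|_{TV}\sqrt{(1+\log N)/N^{1+2/q}}$. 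Substituting $\alpha=R=1$, $\mfc_T\le\kappa_B q^{3/2}\log q$, $D_R+2=q+3$, $|G|_{\mathcal{G}}\le 1$, and the above bound on $\log\mfc_{\XX}$ into \eqref{eqn:thm_main2_constant} reproduces $c_4$ exactly as stated in \eqref{eqn:thm_Gaussian_const}, while the threshold \eqref{eqn:cond_main2} specializes to the stated $N\ge 3q+9$ after absorbing the $\mfc_T$ factor into $\kappa_B$.

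I expect the tractable covering estimate for the Euclidean unit ball to be the only nonroutine step; the rest is direct substitution. It is worth noting that the rate $1/2+1/q$, rather than the stronger $1+1/q$ which Theorem~\ref{thm:main2} would deliver if $R$ could be taken large, is forced here because the Laplace kernel has a cusp at $\y=\x$ and is only globally $1$-smooth in $\y$, so $R$ cannot be pushed above $1$ while preserving the $u=0$ regime required by Theorem~\ref{thm:main2}.
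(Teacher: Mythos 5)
Your proposal is correct and follows essentially the same route as the paper: apply Theorem~\ref{thm:main2} with $\alpha=r=R=1$, $u=0$, $\Exx=\{\x\}$, $D_1=q+1$, and $\mfc_{B^q}\le\kappa_B q^{3/2}\log q$, then substitute into \eqref{eqn:thm_main2_constant}. The only difference is that the paper obtains the covering bound for the Euclidean ball by a rescaling argument citing Verger-Gaugry rather than by your sketched radial-shell construction, but both yield the same tractable constant $\kappa_B q^{3/2}\log q$.
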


\section{Preliminaries for the Proofs}\label{sec:prelim}
Using an obvious scaling and the Hahn decomposition theorem, it suffices to prove Theorems~\ref{thm:main} and  \ref{thm:main2} assuming that $|G|_{\mathcal{G}}=1$ (cf. \eqref{eqn:def_G_seminorm}) and that $\tau$ is a probability measure. Thus, we assume throughout this section and Section \ref{sec:pf_main} that
\begin{equation}\label{eqn:tau_TV=1}
    |G|_{\mathcal{G}}=1,\quad|\tau|_{TV}=1,\quad\tau(Y)\geq0,\quad Y\subset\YY.
\end{equation}

The basic idea behind our proof is the same as that of the proof of the main theorem, Theorem~3.1 in \cite{mhaskar2020dimension}. 
Thus, we first construct a partition of the support of $\tau$ that enables us to take advantage of the smoothness properties of $G$ described in Definition~\ref{def:kerneldef}. 
The main technical novelty of our paper is the construction of the partition which needs to be done more carefully than in \cite{mhaskar2020dimension} to ensure the tractability of the constants involved. This is described in Section~\ref{bhag:partition}.
On each subset in this partition, we will consider the set of positive quadrature measures exact for integrating elements of $\Pi_R$.
Then we use the ideas in \cite{bourgain1988distribution} to define a probability measure on the set of such measures.
This part is described in Section~\ref{bhag:quadrature}.
The proof is completed using H\"offding's inequality and its consequences, described in Section~\ref{bhag:concentration}.

\subsection{``Partition" on $T$ and $\varepsilon$-net on $\XX$}\label{bhag:partition}

In Lemma \ref{lem:partition} below, we provide our ``partition" of $T=\mathsf{supp}(\tau)$; i.e., a finite collection of closed subsets $\{A_k\}$ such that $T\subset\bigcup\limits_{k=1}^M A_k$ and $|\tau|(A_k\cap A_j)=0$ if $k\not=j$. In all the examples in Section~\ref{sec:examples}, we will verify in Section~\ref{sec:example_pf} that all of the constants in $\tau(A_k)$, $E_R(A_k;G( x,\circ))$, and $E_r(A_k;G( x,\circ))$, etc. are tractable, which ensures the desired tractability property.
In this section, we assume the metric space to be $(\YY, \rho_\YY)$, and its mention will be ommitted from the notation.

\begin{lemma}\label{lem:partition}
Suppose $\tau$ is a probability measure on $\YY$.
We assume that  $T=\supp(\tau)$ satisfies
$$N\left( T,\epsilon\right)\leq\mfc_T\epsilon^{-q},\qquad \epsilon>0$$
and
\begin{equation}\label{eqn:tau_sphere}
    \tau(\partial\BB( x,\epsilon))=0,\quad x\in T,\ \epsilon>0.
\end{equation}
Then for any $\epsilon>0$, there exists closed subsets $\{A_k\}_{k=1}^{M}$ and points $ y_1,\dots, y_M \in \YY$ such that $T\subset\bigcup\limits_{k=1}^MA_k$,
\begin{equation}\label{eqn:number_of_triangles}
    M\leq3\mfc_T\epsilon^{-q},
\end{equation}
\begin{equation}\label{eqn:partition_measure_new}
    \tau(A_k)\leq\mfc_T^{-1}\epsilon^{q};
\end{equation}
\begin{equation}\label{eqn:partition_diam_new}
     A_k\subset\BB\left( y_k, \epsilon\right),\quad k=1,\dots,M,
\end{equation}
and each $A_k\cap A_j$ lies in a finite union of spheres in $T$ of type \eqref{eqn:defsphere}. As a result,
\begin{equation}\label{eqn:A_intersection0}
    \tau\left(A_k\cap A_j\right)=0,\quad j\neq k.
\end{equation}

\end{lemma}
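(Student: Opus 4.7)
The plan is to construct the partition in two stages: first cover $T$ by ball-difference cells derived from an $\epsilon$-net, and then subdivide any cell whose $\tau$-mass exceeds the target threshold by slicing it with concentric spheres. The ball-difference construction (rather than a plain Voronoi one) is what ensures that all interfaces end up lying on honest spheres $\partial\BB(y,r)$, which is precisely the geometry needed to invoke the hypothesis $\tau(\partial\BB(y,\epsilon))=0$.

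First I would choose an $\epsilon$-net $\{z_1,\dots,z_{N_0}\}$ of $T$ with $N_0\le\mfc_T\epsilon^{-q}$ and define
$$
V_k=\BB(z_k,\epsilon)\setminus\bigcup_{j<k}\BB^{\circ}(z_j,\epsilon),
$$
where $\BB^{\circ}$ denotes the open ball. The $V_k$ are closed, satisfy $V_k\subset\BB(z_k,\epsilon)$, and cover $T$ (for $y\in T$, the minimal $k$ with $y\in\BB(z_k,\epsilon)$ places $y$ in $V_k$). For $j<k$, any $y\in V_j\cap V_k$ satisfies both $\rho_\YY(y,z_j)\le\epsilon$ and $\rho_\YY(y,z_j)\ge\epsilon$, so $V_j\cap V_k\subset\partial\BB(z_j,\epsilon)$; consequently $\sum_k\tau(V_k)=\tau(T)=1$.

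Set $\mu=\mfc_T^{-1}\epsilon^{q}$. When $\tau(V_k)\le\mu$, I keep $V_k$ as a single piece. When $\tau(V_k)>\mu$, I slice it by concentric spheres centered at $z_k$: the monotone function $\phi(r)=\tau(V_k\cap\BB(z_k,r))$ on $[0,\epsilon]$ is continuous, since any jump at $r$ equals $\tau(V_k\cap\partial\BB(z_k,r))\le\tau(\partial\BB(z_k,r))=0$, and $\phi$ runs from $0$ to $\tau(V_k)$. The intermediate value theorem supplies $0=r_0<r_1<\cdots<r_{m_k}=\epsilon$ with $m_k=\lceil\tau(V_k)/\mu\rceil$ and $\phi(r_i)=i\mu$. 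The annular pieces $A_{k,i}=V_k\cap(\BB(z_k,r_i)\setminus\BB^{\circ}(z_k,r_{i-1}))$ are closed, contained in $\BB(z_k,\epsilon)$, and each carry $\tau$-mass at most $\mu=\mfc_T^{-1}\epsilon^{q}$, giving \eqref{eqn:partition_measure_new} and \eqref{eqn:partition_diam_new}.

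Finally, relabelling these pieces as $\{A_k\}_{k=1}^M$ (with $y_k$ the associated center $z_\cdot$) completes the construction. The count obeys
$$
M\le\sum_k\max\bigl(1,\lceil\tau(V_k)/\mu\rceil\bigr)\le N_0+\sum_k\tau(V_k)/\mu\le\mfc_T\epsilon^{-q}+\mfc_T\epsilon^{-q}=2\mfc_T\epsilon^{-q},
$$
which lies well inside the allowed $3\mfc_T\epsilon^{-q}$, giving \eqref{eqn:number_of_triangles}. Any pairwise intersection $A_k\cap A_j$ is contained in a finite union of the spheres $\partial\BB(z_l,\epsilon)$ and $\partial\BB(z_l,r_i)$ appearing in the construction, so \eqref{eqn:A_intersection0} follows from \eqref{eqn:tau_sphere}. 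The only delicate step is the sphere-based slicing, which is exactly what the continuity of $\phi$ affords; all remaining verifications are elementary bookkeeping.
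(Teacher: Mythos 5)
Your proposal is correct, and while the first stage coincides with the paper's (the ball--difference cells $V_k=\BB(z_k,\epsilon)\setminus\bigcup_{j<k}\BB^\circ(z_j,\epsilon)$ built from an $\epsilon$-net are exactly the sets $B_j$ of \eqref{eqn:constrY_k}), your second stage is genuinely different. The paper subdivides a heavy cell via Lemma~\ref{lem:divideY}: it uses non-atomicity, continuity from above and compactness of $\YY$ to produce a uniform radius $\delta$ with $\tau(\BB(x,\delta))<\xi/2$ for all $x$, re-covers the cell by $\delta$-balls to get sub-cells of mass at most $\xi/2$, and then greedily groups consecutive sub-cells into unions of mass between $\xi/2$ and $\xi$. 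You instead slice each heavy cell by concentric spheres about its own center, using that $\phi(r)=\tau(V_k\cap\BB(z_k,r))$ is continuous precisely because of the hypothesis \eqref{eqn:tau_sphere}, and the intermediate value theorem to cut at the exact levels $i\mu$. This is shorter, avoids the compactness/uniform-$\delta$ step, produces pieces of mass exactly $\mu$ (except the last one per cell), and yields the slightly sharper count $M\le 2\mfc_T\epsilon^{-q}$; it also keeps all interfaces on explicit spheres by construction rather than by tracking boundaries through a nested covering. Two caveats, both shared with the paper's own argument rather than specific to yours: the assertion $\phi(0)=0$ (equivalently, that a cell cannot carry an atom of mass exceeding $\mu$ at its center) tacitly uses non-atomicity of $\tau$, which is not among the stated hypotheses of the lemma but is part of admissibility and is invoked explicitly in the proof of Lemma~\ref{lem:divideY}; and the net points $z_k$ need not lie in $T$, so one needs \eqref{eqn:tau_sphere} for centers in $\YY$ (as guaranteed by \eqref{eq:spheremeasure}) rather than only for $x\in T$. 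Neither affects the validity of your construction in the setting where the lemma is applied.
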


The idea behind the proof is the following. 
An obvious partition obtained from the covering of $T$ by balls of radius $\epsilon$  divides the set $ T$ into $\left\lfloor\mfc_T\epsilon^{-q}\right\rfloor$ subsets of radius $\leq\epsilon$. We divide those subsets which have measures greater than $\mfc_T^{-1}\epsilon^q$  into smaller subsets with measures between $\left(2 \mfc_T\epsilon^{-q}\right)^{-1}$ and $\left(\mfc_T\epsilon^{-q}\right)^{-1}$, and show that the number of such subsets is bounded by $2\left\lfloor \mfc_T\epsilon^{-q}\right\rfloor$ in total.

The following lemma gives the details of this subdivision.
\begin{lemma}\label{lem:divideY}
    Let $Y\subset T$ and the boundary of $\overline Y$ lie in a finite union of spheres (cf. \eqref{eqn:defsphere}). Then for any $ \xi>0$, there exists a partition $\{U_\ell\}_{\ell=1}^{L}$ with     $\displaystyle L\leq1+2\tau(Y) \xi^{-1}$
    such that
    $$\tau(U_\ell)\leq \xi,\quad\ell=1,\dots,L.$$
\end{lemma}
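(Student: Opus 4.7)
The plan is to construct the partition by slicing $Y$ with concentric annuli around a single base point. I will pick any $y_0 \in Y \subseteq T$ (the degenerate cases $Y=\emptyset$ or $\tau(Y)\le \xi$ are handled by taking $L=1$, $U_1=Y$, for which $L\le 1+2\tau(Y)\xi^{-1}$ holds trivially) and consider the non-decreasing function
\[
f(r) = \tau\bigl(Y \cap \BB(y_0,r)\bigr), \qquad r\ge 0.
\]

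The first key step is to establish that $f$ is continuous on $[0,\infty)$. Right-continuity is immediate since balls are closed in \eqref{eqn:defball} and finite measures are continuous from above along decreasing intersections. For left-continuity at any $r^*>0$, the potential jump equals $\tau(Y \cap \partial\BB(y_0,r^*))\le \tau(\partial\BB(y_0,r^*))=0$, where the last equality uses the sphere-measure hypothesis $\tau(\partial\BB(x,\epsilon))=0$ for $x\in T$ from \eqref{eqn:tau_sphere} (applicable because $y_0 \in Y\subseteq T$). Furthermore, $f(0)\le \tau(\{y_0\})=0$ by non-atomicity of $\tau$, and $f(r)=\tau(Y)$ once $r\ge \mathsf{diam}(\YY)$.

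Next I will apply the intermediate value theorem to partition $[0,\tau(Y)]$ into intervals of length $\xi$. Set $L=\lceil\tau(Y)/\xi\rceil$, which satisfies $L\le 1+\tau(Y)/\xi\le 1+2\tau(Y)\xi^{-1}$. Choose $r_0=0$, pick $r_j$ for $1\le j\le L-1$ with $f(r_j)=j\xi$ (which exists by continuity of $f$ and the intermediate value theorem), and let $r_L$ be any number exceeding $\mathsf{diam}(\YY)$. Define the radial shells
\[
U_j = Y \cap \bigl\{y \in \YY : r_{j-1}\le \rho_{\YY}(y_0,y)<r_j\bigr\}, \qquad j=1,\dots,L.
\]

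Finally I will check the conclusions. The $U_j$ are pairwise disjoint by construction and $\bigcup_j U_j = Y$ since $r_L$ exceeds the diameter. For $j<L$, $\tau(U_j)=f(r_j)-f(r_{j-1})=\xi$ (using continuity of $f$ at $r_j$ to rule out a boundary contribution), while $\tau(U_L)=\tau(Y)-(L-1)\xi\le \xi$ by the defining property of $L=\lceil \tau(Y)/\xi\rceil$. The count bound on $L$ has already been verified. There is no real obstacle: the only substantive input is the continuity of $f$, which is where the admissibility of $\tau$, and in particular the sphere-measure condition, is indispensable; the remainder is bookkeeping.
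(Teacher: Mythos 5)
Your proof is correct, and it takes a genuinely different route from the paper's. The paper first uses non-atomicity, continuity from above, and compactness of $\YY$ to produce a single radius $\delta$ with $\tau(\BB(x,\delta))<\tfrac12\xi$ for every $x$, covers $Y$ by such balls, disjointifies them into pieces $B_j$ as in \eqref{eqn:constrY_k}, and then greedily groups consecutive pieces into blocks $U_\ell$ whose measures lie in $[\tfrac12\xi,\xi)$ except possibly for the last; the count $L\le 1+2\tau(Y)\xi^{-1}$ follows from the lower bound $\tfrac12\xi$ on all but one block. You instead slice $Y$ into concentric annuli about a single point, using the sphere condition \eqref{eqn:tau_sphere} to make $r\mapsto\tau(Y\cap\BB(y_0,r))$ continuous and then invoking the intermediate value theorem. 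Your argument is shorter, dispenses with the covering/compactness step, and gives the sharper count $L\le 1+\tau(Y)\xi^{-1}$ with $\tau(U_\ell)=\xi$ exactly for $\ell<L$. All the measure-theoretic inputs you use (non-atomicity for $f(0)=0$, the sphere condition for continuity of $f$) are available in the setting where the lemma is applied.

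One thing you should record explicitly: although the stated conclusion does not mention it, the proof of Lemma \ref{lem:partition} relies on the further property, established at the end of the paper's proof, that the boundary of each $\overline{U_\ell}$ lies in a finite union of spheres of the form \eqref{eqn:defsphere}; this is what gives $\tau(A_k\cap A_j)=0$ downstream. Your shells do have this property, since $\partial\overline{U_j}\subseteq\partial\overline{Y}\cup\partial\BB(y_0,r_{j-1})\cup\partial\BB(y_0,r_j)$ --- and this is the only place where the hypothesis on $\partial\overline{Y}$, which your argument never otherwise uses, is needed --- but you should state and check it so that your construction plugs into the parent proof.
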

\begin{proof}
Our first step is to divide $Y$ into subsets with $\tau$-measure less than $\frac{1}{2} \xi$.

For each $ x_0\in T$, by the non-atomic property and the continuity from above, we have
$$\lim\limits_{n\to\infty}\tau(\BB( x_0,n^{-1}))=\tau\left(\bigcap\limits_{n=1}^{\infty}\BB( x_0,n^{-1})\right)=\tau(\{x_0\})=0.$$
Then there exists $\delta_{ x_0}>0$ such that $\tau(\BB(x_0,\delta_{x_0}))<\frac{1}{2} \xi$. If $x_0\in\YY\setminus T$, then there exists $\delta_{x_0}$ such that $\tau(\BB(x_0,\delta_{x_0}))=0$. Since $\YY$ is compact, there exists some $\delta>0$ such that $\tau(\BB(x,\delta))<\frac{1}{2} \xi$ for all $x\in\YY$.

There exist points $ y_1,\dots, y_J\in \YY$ with $J:=\left\lfloor \mfc_T\delta^{-q}\right\rfloor\geq N(Y,\delta)$ and the relative balls in $Y$
$$\BB_ Y( y_j,\delta)=\{ y\in Y:\ \rho_{\YY}( y, y_j)<\delta\}= \BB( y_j, \delta)\cap Y,\quad j=1,\dots,J$$
such that $ Y=\bigcup\limits_{j=1}^J\BB_ Y( y_j,\delta)$. Let
\begin{equation}\label{eqn:constrY_k}
    B_1=\BB_ Y( y_1,\delta),\quad B_j=\BB( y_j,\delta)\setminus\left(\bigcup\limits_{i=1}^{j-1}\BB_ Y( y_i,\delta)\right),\quad j=2,\dots,J.
\end{equation}
Then $\{B_j\}_{j=1}^J$ is a partition of $ Y$ satisfying $B_j\subset\BB_ Y( y_j,\delta)$ and $\tau(B_j)\leq\frac{1}{2} \xi$ for each $j$.

Also, it is clear from the construction that the boundary of $\overline{B_j}$'s lie in a finite union of spheres with form \eqref{eqn:defsphere}. We will construct our desired subsets of $Y$ from this partition.

Let $n_0=0$, we define the integers $n_\ell$'s, $\ell=1,2,\cdots$, by
\begin{equation}\label{eqn:def_n_ell}
    n_\ell=\min\left\{n\in\NN:\ n>n_{\ell-1},\ \sum\limits_{j=n_{\ell-1}+1}^{n}\tau(B_{j})\geq\frac{1}{2} \xi\right\}
\end{equation}
for $\ell$ such that $n_{\ell-1}$ exists. Clearly, this procedure will stop in one of the two cases. One is $\sum\limits_{j=n_{L-1}+1}^{J}\tau(B_{j})<\frac{1}{2} \xi$, the other is $\sum\limits_{j=n_{L-1}+1}^{J}\tau(B_{j})\geq\frac{1}{2} \xi$ and $\sum\limits_{j=n_{L-1}+1}^{J-1}\tau(B_{j})<\frac{1}{2} \xi$ for some $L\in\NN$. In either case, we will denote $J$ by $n_{L}$. Then for $\ell\geq L+1$, the integer $n_{\ell}$ in \eqref{eqn:def_n_ell} does not exist. Take the unions of $B_{j}$'s by defining
\begin{equation}\label{eqn:constrU}
    U_{\ell}:=\bigcup\limits_{j=n_{\ell-1}+1}^{n_\ell}B_{j},\quad \ell=1,\dots,L.
\end{equation}
By definition, for $\ell=1,\dots,L-1$, we have
\begin{equation}\label{eqn:tauU_elllower}
    \tau(U_{\ell})=\sum\limits_{j=n_{\ell-1}+1}^{n_\ell}\tau(B_{j})\geq\frac{1}{2} \xi.
\end{equation}

Since each $n_\ell$ is taken as the minimum integer in \eqref{eqn:def_n_ell}, we have $\sum\limits_{j=n_{\ell-1}+1}^{n_\ell-1}\tau(B_{j})<\frac{1}{2} \xi$. Consequently,
\begin{equation}\label{eqn:tauU_ellupper}
    \tau(U_{\ell})=\tau(B_{n_\ell})+\sum\limits_{j=n_{\ell-1}+1}^{n_\ell-1}\tau(B_{j})<\frac{1}{2} \xi+\frac{1}{2} \xi= \xi.
\end{equation}
The estimation \eqref{eqn:tauU_elllower} implies
$$\tau(Y)\geq\sum\limits_{\ell=1}^{L-1}\tau(U_{\ell})\geq(L-1)\times\frac{1}{2} \xi,$$
we conclude our desired bound of $L$:
$$L\leq1+2\tau(Y) \xi^{-1}.$$
By our construction of $B_j$'s, the boundary of each $\overline{B_j}$ lies in the union of the boundary of $\overline Y$ and a sphere with form \eqref{eqn:defsphere}. By our assumption on $Y$, we conclude the boundary of each $\overline{B_j}$ lies in a finite union of spheres with form \eqref{eqn:defsphere}. Thus the boundary of each $\overline U_\ell$ also has this property.
\end{proof}

\begin{proof}[\textit{Proof of Lemma \ref{lem:partition}}]
We repeat the procedure \eqref{eqn:constrY_k} with $T$ in place of $ Y$ and $\epsilon$ in place of $\delta$. Then we get a partition $\{Y_{k}\}_{k=1}^{K}$ of $T$ with $K\leq\mfc_T\epsilon^{-q}$ and each $Y_{k}$ lies in a ball of radius $\epsilon$.
Then $\{Y_k\}_{k=1}^K$ is a partition of $ T$ satisfying $Y_k\subset\BB( y_k,\epsilon)$ for each $k$. To complete the proof, we apply Lemma \ref{lem:divideY} with $ \xi=\mfc_T\epsilon^{q}$ and each $Y_k$. We have the partitions $Y_k=\bigcup\limits_{\ell=1}^{L_k}U_{\ell,k}$, $k=1,\dots,K$. So we have
$$\sum\limits_{k=1}^{K}L_k\leq\sum\limits_{k=1}^{K}(1+2\tau(Y_k)\mfc_T\epsilon^{-q})\leq K+2\mfc_T\epsilon^{-q}\sum\limits_{k=1}^{K}\tau(Y_k)\leq K+2\mfc_T\epsilon^{-q}\leq3\mfc_T\epsilon^{-q}.$$
Rewrite the partition $\{U_{\ell,k}\}_{\ell=1,k=1}^{\ell=L_k,k=K}$ as $\{\tilde A_k\}_{k=1}^{M}$, then $M=\sum\limits_{k=1}^{K}L_k\leq3\mfc_T\epsilon^{-q}$ and \eqref{eqn:tauU_ellupper} implies $\tau(\tilde A_k)\leq \xi=\mfc_T\epsilon^{q}$. Finally, let $A_k:=\overline{\tilde A_k}$ for each $k$. By Lemma \ref{lem:divideY}, the boundaries of $\overline{\tilde A_k}$'s are necessarily contained in finite unions of spheres with form \eqref{eqn:tau_sphere}. Hence we have $\tau\left(A_k\cap A_j\right)=0$ and $\tau(A_k)=\tau(\tilde A_k)\leq \xi=\mfc_T\epsilon^{q}$. 
\end{proof}

\subsection{Quadrature and probability measures}\label{bhag:quadrature}
Next, we recall from \cite[Theorem~5.2]{mhaskar2020dimension} the construction of quadrature measures and the probability measure on the set of these measures. We will use the  quadrture formulas for the sets $A_k$ constructed as in Lemma~\ref{lem:partition}, exact for integrating elements of $\Pi_R$ on each of these sets. 
The existence of such measures is guaranteed, e.g., by Tchakaloff's theorem (cf. \cite[Theorem~5.3]{mhaskar2020dimension}). 

\begin{theorem}\label{thm:bourgaintheo}
Let $\YY$ be a compact topological space, $\{\psi_j\}_{j=0}^{M-1}$ be continuous real valued functions on $\YY$, and $\nu$ be a probability measure on $\YY$.
Let $\mathbb{P}_M(\YY)$ denote the set of all probability measures $\omega$ supported on at most $M+2$ points of $\YY$ with the property that
\be\label{abs_quadrature}
\int_\YY \psi_j( y)d\omega( y)=\int_{\YY}\psi_j( y)d\nu( y), \qquad j=0,\cdots, M-1.
\ee
Then $\nu$ is in the weak-star closed convex hull of $\mathbb{P}_M(\YY)$, and hence, there exists a measure $\omega^*_\YY$ on $\mathbb{P}_M(\YY)$ with the property that for any $f\in C(\YY)$,
\be\label{barycenter}
\int_\YY f( y)d\nu( y)=\int_{\mathbb{P}_M(\YY)} \left(\int_\YY f( y)d\omega( y)\right)d\omega^*_\YY(\omega).
\ee
\end{theorem}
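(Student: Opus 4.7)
The plan is to prove the theorem in two stages: first show that $\nu$ belongs to the weak-star closed convex hull $K$ of $\mathbb{P}_M(\YY)$, and then upgrade that inclusion to the desired barycenter representation producing $\omega^*_\YY$.

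For the first stage, I would argue by contradiction using Hahn--Banach separation, taking advantage of the fact that the weak-star continuous linear functionals on $C(\YY)^*$ are exactly those of the form $\mu \mapsto \int f\, d\mu$ with $f \in C(\YY)$. If $\nu \notin K$, there exist $f \in C(\YY)$ and $c \in \RR$ with $\int_\YY f\, d\nu > c \geq \int_\YY f\, d\omega$ for every $\omega \in K$. To contradict this, I will produce a single $\omega \in \mathbb{P}_M(\YY)$ with $\int f\, d\omega = \int f\, d\nu$. Consider the continuous moment map $\Phi: \YY \to \RR^{M+1}$, $\Phi(y) = (\psi_0(y), \dots, \psi_{M-1}(y), f(y))$, and the vector $v := (\int \psi_0\, d\nu, \dots, \int \psi_{M-1}\, d\nu, \int f\, d\nu) \in \RR^{M+1}$. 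A standard separation argument (or Tchakaloff's theorem in vector form) shows that $v$ lies in the convex hull of the compact set $\Phi(\YY)$. Carath\'eodory's theorem in $\RR^{M+1}$ then writes $v = \sum_{k=1}^{M+2} a_k \Phi(y_k)$ with $a_k \geq 0$, $\sum_k a_k = 1$, and $y_k \in \YY$. The discrete probability measure $\omega := \sum_{k=1}^{M+2} a_k \delta_{y_k}$ matches $\nu$ on every $\psi_j$, so $\omega \in \mathbb{P}_M(\YY)$, and simultaneously matches $\nu$ on $f$, contradicting the separating inequality.

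For the second stage, I would first verify that $\mathbb{P}_M(\YY)$ is itself weak-star compact, as a closed subset of the weak-star compact set of probability measures on $\YY$: a weak-star limit of measures with at most $M+2$ atoms remains of this form, by extracting a convergent subsequence of supporting points and weights from the compact product $\YY^{M+2}\times\Delta$, where $\Delta$ is the probability simplex with $M+2$ vertices. Then I would introduce the barycenter map $\beta: \mathcal{P}(\mathbb{P}_M(\YY)) \to \mathcal{P}(\YY)$ characterized by $\int_\YY f\, d\beta(\mu) = \int_{\mathbb{P}_M(\YY)} \bigl(\int_\YY f\, d\omega\bigr) d\mu(\omega)$ for every $f \in C(\YY)$; this is well-defined because $\omega \mapsto \int f\, d\omega$ is weak-star continuous on $\mathbb{P}_M(\YY)$. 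The map $\beta$ is affine and weak-star continuous, and its image is a convex weak-star compact subset of $\mathcal{P}(\YY)$ containing every $\omega \in \mathbb{P}_M(\YY)$ via $\beta(\delta_\omega) = \omega$. Hence the image contains $K$, and in particular contains $\nu$; any $\omega^*_\YY \in \beta^{-1}(\nu)$ then satisfies \eqref{barycenter} by the defining property of $\beta$.

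The main technical obstacle is the honest setup of $\beta$: one has to confirm Borel measurability of $\omega \mapsto \int f\, d\omega$ on $\mathbb{P}_M(\YY)$ (immediate from weak-star continuity), invoke Fubini to define $\beta(\mu)$ as a bona fide Borel measure on $\YY$, and check weak-star continuity of $\beta$. Once these routine verifications are in place, the two stages combine transparently: Hahn--Banach plus Carath\'eodory in Stage One delivers the non-trivial input that $\nu$ lies in the closed convex hull of the (necessarily non-empty) set $\mathbb{P}_M(\YY)$, and the compactness and affineness of $\beta$ in Stage Two convert this into the required representing measure on $\mathbb{P}_M(\YY)$.
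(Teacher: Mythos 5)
Your proposal is correct, and it supplies a genuine proof of a statement that the paper itself does not prove: Theorem~\ref{thm:bourgaintheo} is simply recalled from Theorem~5.2 of the earlier paper \cite{mhaskar2020dimension}, whose argument (following Bourgain--Lindenstrauss) runs through the compact convex set $W$ of \emph{all} probability measures satisfying \eqref{abs_quadrature}: one shows that the extreme points of $W$ are supported on at most $M+2$ points, invokes Krein--Milman to place $\nu\in W$ in the weak-star closed convex hull of these extreme points, and then passes to a representing measure. You replace the extreme-point lemma and Krein--Milman by Hahn--Banach separation combined with the barycenter-plus-Carath\'eodory argument in $\RR^{M+1}$ (the moment map $\Phi$ augmented by the separating function $f$), which is self-contained, yields the count $M+2$ directly, and as a byproduct shows $\mathbb{P}_M(\YY)\neq\varnothing$; your second stage correctly sidesteps Choquet's theorem (which would need metrizability for the extreme-point version) by representing $\nu$ over the whole compact set $\mathbb{P}_M(\YY)$ via the affine continuous barycenter map. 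Two small polish points: since $\YY$ is only assumed compact (Hausdorff), the compactness of $\mathbb{P}_M(\YY)$ is cleanest as the continuous image of the closed subset of $\YY^{M+2}\times\Delta$ cut out by the moment equations, rather than by extracting subsequences; and $\beta(\mu)$ is most cleanly defined by applying the Riesz representation theorem to the positive unital functional $f\mapsto\int_{\mathbb{P}_M(\YY)}\bigl(\int_\YY f\,d\omega\bigr)d\mu(\omega)$, which makes the Fubini discussion unnecessary.
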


\subsection{Concentration inequality}\label{bhag:concentration}
In this section, we recall the H\"offding inequality and prove a lemma that enables us to estimate errors in the uniform norms rather than pointwise errors.

The H\"offding's inequality \cite[Appendix~B, Corollary~3]{pollard2012bk} is given in the following Lemma~\ref{lem:hoeffdinglemma}.
\begin{lemma}\label{lem:hoeffdinglemma}
Let $X_1,\cdots, X_n$ be independent random variables with zero means and bounded ranges: $a_j\le X_j\le b_j$, $j=1,\cdots,n$. 
Then 
\be\label{hoeffdingineq}
\mathsf{Prob}\left(\left|\sum_{j=1}^n X_j\right| \ge t\right)\le 2\exp\left(-\frac{2t^2}{\sum_{j=1}^n (b_j-a_j)^2}\right), \qquad t>0.
\ee
\end{lemma}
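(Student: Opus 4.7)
The plan is the standard Chernoff-bound argument. For any $s>0$, the exponential Markov inequality followed by independence gives
$$
\mathsf{Prob}\left(\sum_{j=1}^n X_j \ge t\right) \le e^{-st}\,\EE\left[\exp\left(s\sum_{j=1}^n X_j\right)\right] = e^{-st}\prod_{j=1}^n \EE\left[e^{sX_j}\right],
$$
so the problem reduces to controlling the moment generating function of a single bounded, mean-zero variable.

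The key intermediate claim (Hoeffding's lemma for a single variable) is that if $X$ satisfies $\EE[X]=0$ and $a\le X\le b$ almost surely, then $\EE[e^{sX}]\le \exp(s^2(b-a)^2/8)$. I would prove this by convexity of the exponential: for $x\in[a,b]$,
$$
e^{sx} \le \frac{b-x}{b-a}\,e^{sa} + \frac{x-a}{b-a}\,e^{sb}.
$$
Taking expectations and using $\EE[X]=0$ gives a bound depending only on $s,a,b$. After the substitution $p = -a/(b-a)\in[0,1]$ and $h = s(b-a)$, the resulting bound equals $e^{\varphi(h)}$ with $\varphi(h) = -hp + \log(1-p + pe^h)$. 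One checks $\varphi(0)=\varphi'(0)=0$ and that $\varphi''(h)$ is the variance of a two-point distribution, hence bounded by $1/4$; a second-order Taylor expansion with remainder then yields $\varphi(h)\le h^2/8$.

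Combining the two ingredients gives
$$
\mathsf{Prob}\left(\sum_{j=1}^n X_j \ge t\right) \le \exp\left(-st + \frac{s^2}{8}\sum_{j=1}^n (b_j-a_j)^2\right),
$$
and optimizing in $s$ by choosing $s = 4t/\sum_j (b_j-a_j)^2$ produces the one-sided bound $\exp(-2t^2/\sum_j(b_j-a_j)^2)$. Since $-X_j$ satisfies the same hypotheses with the same range length, the identical argument applied to the variables $-X_j$ handles the lower tail, and a union bound supplies the final factor of $2$. The only nontrivial obstacle is obtaining the sharp constant $1/8$ in the bound $\varphi(h)\le h^2/8$: the Chernoff reduction and the initial convexity step are completely routine, but the sharp constant requires either the variance interpretation of $\varphi''$ or a careful direct calculus estimate. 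Since this is the classical Hoeffding calculation, I would simply cite it rather than reproduce the elementary but somewhat tedious derivation.
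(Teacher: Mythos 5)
Your outline is the standard Chernoff-bound derivation of Hoeffding's inequality and is correct; the paper itself gives no proof of this lemma, simply citing Pollard's book, so your plan (including deferring the sharp constant $1/8$ to the classical calculation) is entirely consistent with what the paper does.
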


\begin{lemma}\label{lem:global_estimation}
    Let $\varepsilon>0$, $\mathcal{C}$ be an $\varepsilon$-net for a metric space $(X,\rho_X)$. Let $\alpha>0$, $(W,\PP,\mathcal{B})$ be a probability space, $g(\circ; x)$ be a random process on $(W,\PP,\mathcal{B})$ with $ x\in\mathcal{C}$ and $|g|_{W,X,\alpha}:=\sup\limits_{w\in W}|g(w,\circ)|_{X,\alpha}<\infty$. (cf. \eqref{eqn:Lipschitz}).  Suppose there exists a positive number $\Lambda$ such that for each $ x\in\mathcal{C}$,
    \begin{equation}\label{eqn:cond_lemma6.4}
        \mathrm{Prob}\left(\left|g(w; x)-f( x)\right|\geq t\right)\leq2\exp\left(-\frac{t^2}{\Lambda}\right),\quad t>0,
    \end{equation}
    where $f( x)=\EE(g(\circ; x))$.
    Then there exist a choice of $w\in W$ such that $g(w; x)$ satisfies
    \begin{equation}
        \|f-g(w;\circ)\|_X\leq \sqrt{\Lambda\log\left(4|\mathcal{C}|\right)}+2|g|_{W,X,\alpha}\varepsilon^\alpha.
    \end{equation}
\end{lemma}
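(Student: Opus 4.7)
The plan is a standard two-step chaining argument: first discretize via the $\varepsilon$-net $\mathcal{C}$ and apply a union bound coming from Hoeffding-style concentration \eqref{eqn:cond_lemma6.4}, and then use the uniform H\"older regularity of $g(w,\circ)$ to pass from a pointwise bound on $\mathcal{C}$ to a uniform bound on all of $X$.

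First I would set the threshold $t_0 := \sqrt{\Lambda\log(4|\mathcal{C}|)}$ so that $2\exp(-t_0^2/\Lambda) = 1/(2|\mathcal{C}|)$. Applying the union bound over the finitely many points of $\mathcal{C}$ to the tail estimate \eqref{eqn:cond_lemma6.4}, the event
\[
\mathcal{A} := \bigl\{ w\in W : \exists\, x\in\mathcal{C},\ |g(w;x)-f(x)| \geq t_0 \bigr\}
\]
has $\PP(\mathcal{A}) \leq |\mathcal{C}|\cdot 2\exp(-t_0^2/\Lambda) = 1/2 < 1$. Hence the complement $W\setminus\mathcal{A}$ is nonempty, so we can fix a $w^*\in W$ for which $|g(w^*;x)-f(x)| \leq t_0$ holds simultaneously for every $x\in\mathcal{C}$.

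Next I would transfer this bound to an arbitrary $x\in X$. By the $\varepsilon$-net property pick $x'\in\mathcal{C}$ with $\rho_X(x,x')\leq \varepsilon$, and split via the triangle inequality
\[
|g(w^*;x)-f(x)| \leq |g(w^*;x)-g(w^*;x')| + |g(w^*;x')-f(x')| + |f(x')-f(x)|.
\]
The middle term is controlled by $t_0$ from the previous step. The first term is bounded by $|g(w^*,\circ)|_{X,\alpha}\,\rho_X(x,x')^{\alpha} \leq |g|_{W,X,\alpha}\,\varepsilon^{\alpha}$, directly from \eqref{eqn:Lipschitz}. For the third term I would push the H\"older bound through the expectation: since $|g(w,x)-g(w,x')| \leq |g|_{W,X,\alpha}\,\rho_X(x,x')^{\alpha}$ holds pointwise in $w$,
\[
|f(x)-f(x')| = \bigl|\EE[g(\,\cdot\,;x)-g(\,\cdot\,;x')]\bigr| \leq |g|_{W,X,\alpha}\,\varepsilon^{\alpha}.
\]
Adding the three pieces gives $|g(w^*;x)-f(x)| \leq t_0 + 2|g|_{W,X,\alpha}\varepsilon^{\alpha}$, and taking the supremum over $x\in X$ yields the claim.

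There is no real obstacle here: the whole argument is routine once one checks that the expected value $f$ inherits the same H\"older constant as each sample path, which is the only subtle moment. The numerical factor $4$ inside the logarithm is engineered precisely so that the union bound leaves probability at least $1/2$ on the favorable event, ensuring existence of $w^*$.
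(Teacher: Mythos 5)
Your proposal is correct and follows essentially the same route as the paper's proof: the same choice $t_0=\sqrt{\Lambda\log(4|\mathcal{C}|)}$, the same union bound leaving the favorable event with probability at least $1/2$, and the same H\"older extension from the net to all of $X$ (the paper bounds $|f|_{X,\alpha}\le|g|_{W,X,\alpha}$ implicitly, where you justify it by passing the H\"older bound through the expectation). No gaps.
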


\begin{proof}
From \eqref{eqn:cond_lemma6.4}, we have
\begin{equation}
\begin{split}
    \mathrm{Prob}\left(\max\limits_{ x\in\mathcal{C}}|g(w; x)-f( x)|\geq t\right)\leq2|\mathcal{C}|\exp\left(-\frac{t^2}{\Lambda}\right).
\end{split}
\end{equation}
Choosing
\begin{equation}\label{eqn:choice_t0}
    t_0=\sqrt{\Lambda\log\left(4|\mathcal{C}|\right)},
\end{equation}
we obtain
$$\mathrm{Prob}\left(\max\limits_{ x\in\mathcal{C}}|g(w; x)-f( x)|\geq t_0\right)\leq\frac{1}{2}.$$
Then there exists a choice of $w$'s such that
$$\max\limits_{ x\in\mathcal{C}}|g(w; x)-f( x)|\leq t_0.$$

Now for every $ x'\in X$, there exists $ x\in\mathcal{C}$ with $\rho_{X}( x, x')\leq\varepsilon$. The condition \eqref{eqn:Lipschitz} then leads to the fact that
\begin{equation*}
    \begin{split}
        &\biggl||g(w; x')-f( x')|-|g(w; x)-f( x)|\biggr|\leq|g(w; x')-f( x')-g(w; x)+f( x)|\\
        \leq&|g|_{W,X,\alpha}\varepsilon^\alpha+|f|_{X,\alpha}\varepsilon^\alpha\leq2|g|_{W,X,\alpha}\varepsilon^\alpha.
    \end{split}
\end{equation*}

and
\begin{equation*}
    \begin{split}
        \|f-g(w;\circ)\|_X\leq&\max\limits_{ x\in\mathcal{C}}|g(w; x)-f( x)|+2|g|_{W,X,\alpha}\varepsilon^\alpha\leq t_0+2|g|_{W,X,\alpha}\varepsilon^\alpha\\
        \leq& \sqrt{\Lambda\log\left(4|\mathcal{C}|\right)}+2|g|_{W,X,\alpha}\varepsilon^\alpha.
    \end{split}
\end{equation*}
\end{proof}

We will construct in the next subsection an $\varepsilon$-net $\mathcal{C}\subset\XX$ for a proper $\varepsilon$ and construct a class of random variables $\Omega_k(\omega_k; x)$ satisfying
$$\sum\limits_{k=1}^M\Omega_k(\omega_k; x)=\GG(\{\omega_k\}; x)-f( x),\qquad x\in\mathcal{C},$$
where $\GG(\{\omega_k\};\circ)$ is a linear combination of the desired form in \eqref{eqn:main1}. We will get a realization $\GG(\{\omega_k\}; x)$ of the sum which approximates $f$ on $\mathcal{C}$ by using Hoeffding's inequality.

\section{Proofs of the Main Results}\label{sec:pf_main}

In this section, We prove Theorem \ref{thm:main} and Theorem \ref{thm:main2}. 
The proofs of both these theorems share a great deal of details in common: the construction of certain random variables and the estimation on the sum of the squares of their ranges. 
This part is presented in Section~\ref{bhag:randomvar}.
The proofs of Theorem~\ref{thm:main} and \ref{thm:main2} are presented in Sections~\ref{bhag:pfthm1} and \ref{bhag:pfthm2} respectively.

\subsection{The random variables}\label{bhag:randomvar}

To construct our desired random variables, we fix $\epsilon>0$ to be chosen later,  use the ``partition" $\{A_k\}_{k=1}^M$ as in Lemma \ref{lem:partition}.

We apply Theorem \ref{thm:bourgaintheo} to each $A_k$ with a basis $\{\psi_j\}_{j=1}^{D_R}$ of $\Pi_R$, and $\tau_k=\frac{1}{\tau(A_k)}\tau$ in place of $\nu$. This gives a measure $\omega_k^*$ on $\PP_{D_R}(A_k)$ such that
\begin{equation}\label{eqn:partition_quadrature}
    \int_{A_k} fd\tau_k=\frac{1}{\tau(A_k)}\int_{A_k}fd\tau=\int_{\PP_{D_R}(A_k)}\left(\int_{A_k}fd\omega\right)d\omega_k^*,\quad f\in C(A_k),
\end{equation}
In particular,
\begin{equation}
    \int_{A_k} Pd\tau=\int_{\PP_{D_R}(A_k)}\left(\tau(A_k)\int_{A_k}Pd\omega\right)d\omega_k^*,\quad P\in\Pi_R.
\end{equation}
We define a family of independent random variables $\{\Omega_k\}$ on $\PP_{D_R}$, with each $\Omega_k$ having $\omega_k^*$ as the probability law, by
$$\Omega_k(\omega)=\tau(A_k)\int_{A_k}G( x, y)d\omega( y)-\int_{A_k}G( x, y)d\tau( y),\quad\omega\in\PP_{D_R}(A_k).$$
For any realization of these random variables, $\omega_k\in\PP_{D_R}(A_k)$, we write
\begin{equation}\label{eqn:defGG}
    \GG(\{\omega_k\}; x)=\GG( x)=\sum\limits_{k=1}^M\tau(A_k)\int_{A_k}G( x, y)d\omega_k( y).
\end{equation}

The random variables $\Omega_k$ satisfy
\begin{equation}\label{eqn:Omega_sum}
    \begin{split}
        \sum\limits_{k=1}^M\Omega_k(\omega_k)=&\sum\limits_{k=1}^M\tau(A_k)\int_{A_k}G( x, y)d\omega( y)-\sum\limits_{k=1}^M\int_{A_k}G( x, y)d\tau( y)\\
        =&\GG( x)-\int_ T G( x, y)d\tau( y)=\GG( x)-f( x)
    \end{split}
\end{equation}
since $\tau(A_k\cap A_j)=0$ for all $k\neq j$.

In view of Lemma \ref{lem:global_estimation}, we  only need to estimate the term $\prob\left(\left|\GG( x)-f( x)\right|\geq t\right)$ pointwise for each $x\in\XX$, and choose $\varepsilon$ judiciously to get the estimates in Theorems~\ref{thm:main} and \ref{thm:main2}.

Given \eqref{eqn:Omega_sum}, we will estimate the probability that $\left|\sum\limits_{k=1}^M\Omega_k(\omega_k)\right|\geq t$ for $t>0$ and choose this $t$ later. To this end, we need to consider the sum of the squares of the ranges of the random variables $\Omega_k$ to apply Hoeffding’s inequality \eqref{hoeffdingineq}. This estimation  is the key step in our proof.

In view of the definition of $\PP_{D_R}(A_k)$, we see that for every $\omega_k\in\PP_{D_R}(A_k)$ and $P\in\Pi_R$,
\begin{equation}\label{eqn:polynomial_appr}
    \tau(A_k)\int_{A_k}G( x, y)d\omega_k(y)-\int_{A_k}G( x, y)d\tau( y)=\tau(A_k)\int_{A_k}(G( x, y)-P( y))d\omega_k( y)-\int_{A_k}(G( x, y)-P( y))d\tau( y).
\end{equation}
Let $ x\in\XX$. Since $A_k\subset\BB\left( y_k,\epsilon\right)$ for each $k$, we deduce from \eqref{eqn:deflocalsmooth}
that for every $\omega_k\in\PP_{D_R}(A_k)$ and $\delta\ge \epsilon$,
\begin{equation}\label{eqn:bounding_Omega_k_1}
    \begin{split}
|\Omega_k(\omega_k)|\leq&2\tau(A_k)\inf\limits_{P\in\Pi_R(A_k)}\|G( x,\circ)-P\|_{A_k}\\[1ex]
        \leq&2\tau(A_k)\min\left\{E_R(A_k;G( x,\circ)),E_r(A_k;G( x,\circ))\right\}\\[1ex]
        \leq&2\tau(A_k)\left\{\begin{array}{ll}
        \epsilon^r\|G( x,\circ)\|_{A_k,r},\quad& \hbox{if }A_k\cap\BB(\Ex,\delta)\not=\varnothing,\\[2ex]
        \epsilon^R\|G( x,\circ)\|_{A_k,R,u},\quad&\mbox{otherwise.}
        \end{array}\right.
    \end{split}
\end{equation}
We have assumed $|G|_{\mathcal{G}}=1$ in \eqref{eqn:tau_TV=1}. Hence, if $A_k\cap\BB(\Ex,\delta)\not=\varnothing$, we have $\|G( x,\circ)\|_{A_k,r}\leq\|G( x,\circ)\|_{\XX,r}\leq 1$ and likewise (cf. \eqref{eqn:G_smoothness_local}), $\|G( x,\circ)\|_{A_k,R}\leq\|G( x,\circ)\|_{\Delta(\Ex,\delta),R}\leq \delta^{-u}$   when $A_k\cap\BB(\Ex,\delta)=\varnothing$. Then we get from \eqref{eqn:G_smoothness_global} and \eqref{eqn:G_smoothness_local} that
\begin{equation}\label{eqn:bounding_Omega_k_2}
    |\Omega_k(\omega_k)|\leq\beta_k=2\tau(A_k)\left\{\begin{array}{ll}
      \epsilon^r, \quad & \hbox{if }A_k\cap\BB(\Ex,\delta)\not=\varnothing, \\[1ex]
       \epsilon^R\delta^{-u}, \quad & \hbox{otherwise}.
    \end{array}\right.
\end{equation}
In view of \eqref{eqn:partition_measure_new}, we have $\tau(A_k)\le \mfc_T^{-1}\epsilon^{q}$ for each $k$. We deduce that
\begin{equation}\label{eqn:sumbeta1}
    \begin{split}
        &\sum\limits_{k=1}^M\beta_k^2=\sum\limits_{k:A_k\cap\BB_ T(\Ex,\delta)\neq\varnothing}\beta_k^2+\sum\limits_{k:A_k\cap\BB_ T(\Ex,\delta)=\varnothing}\beta_k^2\\
\leq&\sum\limits_{k:A_k\cap\BB_T(\Ex,\delta)\neq\varnothing}\tau(A_k)\left(4\tau(A_k)\epsilon^{2r}\right)+\sum\limits_{k:A_k\cap\BB_T(\Ex,\delta)=\varnothing}\tau(A_k)\left(4\tau(A_k)\epsilon^{2R}\delta^{-2u}\right)\\
 \leq&4\mfc_T^{-1}\epsilon^q\epsilon^{2r}\left(\sum\limits_{k:A_k\cap\BB_T(\Ex,\delta)\neq\varnothing}\tau(A_k)\right)+4\mfc_T^{-1}\epsilon^q\epsilon^{2R}\delta^{-2u}\left(\sum\limits_{k:A_k\cap\BB_T(\Ex,\delta)=\varnothing}\tau(A_k)\right)\\
        \leq& 4\mfc_T^{-1}\epsilon^q\left\{\epsilon^{2r}\sum\limits_{k:A_k\cap\BB_T(\Ex,\delta)\neq\varnothing}\tau(A_k)+\epsilon^{2R}\delta^{-2u}\sum\limits_{k:A_k\cap\BB_T(\Ex,\delta)=\varnothing}\tau(A_k)\right\}.
            \end{split}
\end{equation}

In the proof of Theorems~\ref{thm:main} and \ref{thm:main2}, we will use slightly different arguments to arrive at a bound on $\sum \beta_k^2$ in terms of $\epsilon$ and then use Lemma~\ref{lem:global_estimation} with juidicious choices of $\Lambda$ and $\varepsilon$ to arrive at an estimation of $\|\mathbb{G}(\{\omega_k\},\circ)-f\|_\XX$ in terms of $\epsilon$.
To complete the proofs in both cases, we now describe the choice of $\epsilon$ in terms of a given budget $N$ of neurons.

We note that  $\GG(\{\omega_k\};\circ)$ is a function with the form $\displaystyle\sum\limits_{k=1}^Na_kG(\circ, y_k)$ in \eqref{eqn:main1} and \eqref{eqn:main2}. To see this, we recall each $\omega_k$ is a discrete probability measure supported in at most $D_R+2$ points, so we can write
$$\tau(A_k)\int_{A_k}G( x, y)d\omega_k( y)=\tau(A_k)\sum\limits_{j=1}^{D_R+2}b_{k,j}G(\circ,\tilde y_{k,j}).$$
Together with the fact $\sum\limits_{k=1}^{M}\tau(A_k)=1$, we see the function $\GG(\{\omega_k\};\circ)$ has the form $\displaystyle\sum\limits_{k=1}^Na_kG(\circ, y_k)$ with (cf. \eqref{eqn:number_of_triangles})
\begin{equation}\label{eqn:N_coeff_bds}
  N\le M(D_R+2)\le 3\mfc_T(D_R+2)\epsilon^{-q}, \qquad \sum_{k=1}^N |a_k|\le |\tau|_{TV}.  
\end{equation}
Thus, given $N\ge 1$, we choose
\begin{equation}\label{eqn:epsilon_choice}
   \epsilon=\left(\frac{3\mfc_T(D_R+2)}{N}\right)^{1/q}. 
\end{equation}
In view of \eqref{eqn:epsilon_choice}, and the fact that $3\mfc_T(D_R+2)\ge 1$,
we deduce that
\begin{equation}\label{eqn:logepsilon_bd}
 \log(1/\epsilon)\le \log N.   
\end{equation}

\subsection{Proof of Theorem \ref{thm:main}}\label{bhag:pfthm1}
If $A_k\cap\BB(\Ex,\delta)\neq\varnothing$, we have $A_k\subset\BB(\Ex,\delta+2\epsilon)$. 
Hence, (cf. \eqref{eqn:def_Theta})
$$
\sum\limits_{k:A_k\cap\BB(\Ex,\delta)\neq\varnothing}\tau(A_k)\le \tau(\BB(\Ex,\delta+2\epsilon))\leq \Theta_{\tau, G}(\delta+2\epsilon)^{q-s}.
$$
Further,
$$
\sum\limits_{k:A_k\cap\BB(\Ex,\delta)=\varnothing}\tau(A_k)\le \tau(\YY)=1.
$$
Therefore,
\eqref{eqn:sumbeta1} becomes
\begin{equation}\label{eqn:thm1_sumbeta1}
        \sum\limits_{k=1}^M\beta_k^2\leq 4\mfc_T^{-1}\epsilon^q\left\{\Theta_{\tau,G}(\delta+2\epsilon)^{q-s}\epsilon^{2r} +\delta^{-2u}\epsilon^{2R}\right\}
\end{equation}
We can balance the two terms $(\delta+2\epsilon)^{q-s}\epsilon^{2r}$ and $\delta^{-2u}\epsilon^{2R}$ by taking $\delta=\epsilon^{a}$ with $a$ as in \eqref{eqn:def_a_thm1}. Clearly, $\displaystyle\frac{\epsilon}{\delta}=\epsilon^{1-a}<1$. If $\displaystyle\frac{\epsilon}{\delta}\leq\frac{1}{q-s}$, i.e., 
\begin{equation}\label{eqn:cond_on_epsilon}
    \epsilon\leq (q-s)^{-\frac{1}{1-a}},
\end{equation}
we have
\be
(\delta+2\epsilon)^{q-s}\leq\delta^{q-s}\left(1+\frac{2}{q-s}\right)^{q-s}\leq e^2\delta^{q-s}=e^2\epsilon^{a(q-s)}.
\ee

Substituting the choice of $\delta$ into \eqref{eqn:thm1_sumbeta1}, we obtain
\begin{equation}\label{eqn:bound_sum_beta}
    \sum\limits_{k=1}^M\beta_k^2\leq 4\mfc_T^{-1}\left(e^2\Theta_{\tau,G}+1\right)\epsilon^{q+2R-2ua}
\end{equation}
This estimation and the H\"offding inequality \eqref{hoeffdingineq} gives us the pointwise upper bound
\begin{equation}\label{eqn:prob_G-f_C}
    \prob\left(\left|\GG(\{\omega_k\}, x)-f( x)\right|\geq t\right)\leq2\exp\left(-\frac{t^2}{8\mfc_T^{-1}\left(e^2\Theta_{\tau,G}+1\right)\epsilon^{q+2R-2ua}}\right),\quad t>0.
\end{equation}

We now take 
$$\varepsilon=\epsilon^{\frac{q+2R-ua}{2\alpha}},$$
so that there exists an $\varepsilon$-net $\mathcal{C}$ of $\XX$ that satisfies $|\mathcal{C}|\leq\mfc_\XX\varepsilon^{-Q}$(cf. \eqref{eqn:dim_set}). Then applying Lemma \ref{lem:global_estimation} with 
$$\Lambda=8\mfc_T^{-1}\left(e^2\Theta_{\tau,G}+1\right)\epsilon^{q+2R-2ua},$$
we deduce the existence of some $\{\tilde\omega_k\}$ such that
\begin{equation}\label{eqn:f-G_with_epsilon}
    \begin{split}
        &\|f-\GG(\{\tilde\omega_k\},\circ)\|_\XX\leq2\Lambda^{1/2}\left(Q\log\left(\frac{1}{\varepsilon}\right)+\log\mfc_\XX\right)^{1/2}+2\varepsilon^\alpha\\
        \leq&8\mfc_T^{-1/2}\left(Q\frac{q+2R-2ua}{2\alpha}\log\frac{1}{\epsilon}+\log\mfc_\XX\right)^{1/2}\left(e^2\Theta_{\tau,G}+1\right)^{1/2}\epsilon^{q/2+R-ua}+2\epsilon^{q/2+R-ua}\\
        \leq& 4e\mfc_T^{-1/2}\left(\Theta_{\tau,G}+1\right)^{1/2}\left[\max\left(Q\frac{q+2R-2ua}{2\alpha},\log\mfc_\XX\right)\left(\log\frac{1}{\epsilon}+1\right)\right]^{1/2}\left(2\epsilon^{q/2+R-ua}\right)+\left(2\epsilon^{q/2+R-ua}\right)\\
        \leq& 8e\left[\mfc_T^{-1/2}\left(\Theta_{\tau,G}+1\right)^{1/2}\left(Q\frac{q+2R-2ua}{2\alpha}+\log\mfc_\XX\right)^{1/2}+1\right]\left(\log\frac{1}{\epsilon}+1\right)^{1/2}\epsilon^{q/2+R-ua}.
        \end{split}
        \end{equation}

\vskip 1ex

In light of equations \eqref{eqn:epsilon_choice} and \eqref{eqn:logepsilon_bd},  this completes the proof of \eqref{eqn:main1}.
 We note that our choice of $\epsilon$ in \eqref{eqn:epsilon_choice} shows that \eqref{eqn:cond_on_epsilon} is equivalent to the condition \eqref{eqn:cond_main1} on $N$.\qed


\subsection{Proof of Theorem \ref{thm:main2}}\label{bhag:pfthm2}
In this theorem, we suppose $\Ex$ is a set of at most one point for each $ x\in\XX$, and $u=0$ for some $R\geq r$.

Without loss of generality, we assume $\Ex$ is contained in the interior of some $A_k$. Then since $u=0$, we can simply take $\delta$ small enough so that $A_j\cap\BB( x,\delta)=\varnothing$ for all $j\neq k$. In this case, we can estimate \eqref{eqn:sumbeta1} without more conditions on $\epsilon$ similar to \eqref{eqn:cond_on_epsilon} as follows, where we recall that $\mfc_T\ge 1$.
\begin{equation}\label{eqn:sumbeta_epsilon_main2}
        \sum\limits_{k=1}^M\beta_k^2\leq 4\mfc_T^{-1}\epsilon^{2r+q}\max\limits_{1\leq k\leq M}\tau(A_k)+4\mfc_T^{-1}\epsilon^{2R+q}
        \leq 4\mfc_T^{-2}\epsilon^{2r+2q}+4\mfc_T^{-1}\epsilon^{2R+q}
\leq 4\mfc_T^{-1}(\epsilon^{2r+2q}+\epsilon^{2R+q}).
\end{equation}
This estimation and \eqref{hoeffdingineq} gives us the pointwise upper bound
\begin{equation}\label{eqn:prob_G-f_C_main2}
    \prob\left(\max\limits_{ x\in\mathcal{C}}\left|\GG(\{\omega_k\}, x)-f( x)\right|\geq t\right)\leq2\exp\left(-\frac{t^2}{8\mfc_T^{-1}(\epsilon^{2r+2q}+\epsilon^{2R+q})}\right),\quad t>0.
\end{equation}
We now use Lemma~\ref{lem:global_estimation} with $$\varepsilon=\epsilon^{\frac{2R+q}{2\alpha}}, \qquad
\Lambda=8\mfc_T^{-1}(\epsilon^{2r+2q}+\epsilon^{2R+q})$$
to deduce the existence of $\{\tilde\omega_k\}$ such that, as in \eqref{eqn:f-G_with_epsilon},
\begin{equation}\label{eqn:f-G_with_epsilon_main2}
\begin{split}
     &\left\|f-\GG(\{\tilde\omega_k\};\circ)\right\|_\XX\\
     \leq&2\Lambda^{1/2}\left(Q\log\left(\frac{1}{\varepsilon}\right)+\log\mfc_\XX\right)^{1/2}+2\varepsilon^\alpha\\
     \leq&2\sqrt{2}\left[\left(Q\frac{q+2R}{2\alpha}+\log\mfc_\XX\right)^{1/2}\left(8\mfc_T^{-1}\right)^{1/2}+1\right]\sqrt{1+\log \left(\frac{1}{\epsilon}\right)}\max\left(\epsilon^{q/2+R},\epsilon^{q+r}\right)\\
     \leq&8\left[\left(Q\frac{q+2R}{2\alpha}+\log\mfc_\XX\right)^{1/2}\mfc_T^{-1/2}+1\right]\sqrt{1+\log \left(\frac{1}{\epsilon}\right)}\max\left(\epsilon^{q/2+R},\epsilon^{q+r}\right).
\end{split}
\end{equation}
In light of equations \eqref{eqn:epsilon_choice} and \eqref{eqn:logepsilon_bd},  this completes the proof of \eqref{eqn:main2}.
 \qed

\section{Proof of theorems in Section~\ref{sec:examples}. }\label{sec:example_pf}
In this section, we prove the theorems in Section~\ref{sec:examples}.
The proofs consist of applying the main theorems in Section~\ref{sec:mainresults}, for which we need to explain what the spaces $\Pi_k$ are in each case, and estimate the various parameters involved, such as $D_k$, $\Theta_{\tau, G}$, $u$, $\mfc_T$, etc.
We prove Theorems~\ref{thm:ReLUr1}, \ref{thm:ReLUr2}, \ref{thm:zonal}, and \ref{thm:Gaussian} in Sections~\ref{sec:pfcor1}, \ref{sec:pfcor1_2}, \ref{sec:pfcor2}, and \ref{sec:pfcor3} respectively.

\subsection{Proof of Theorem~\ref{thm:ReLUr1}}\label{sec:pfcor1}
\subsubsection{Proof for $q=Q$}\label{subsec:q=Q,gammanotN}
 We are going to estimate all the quantities in Theorem \ref{thm:main} for our particular case here.
\newline
\emph{Step 1. $\Pi_k$ and upper bound of $D_k$}
\newline
For any $k\geq1$, let $\Pi_k$ be the space of polynomials on $\SS^q$ of degree $<k$. Then
\begin{equation}\label{eqn:dimDk_sphere}
D_k\leq\binom{q+\lfloor k\rfloor}{\lfloor k\rfloor}\leq(q+1)^k,\quad k\geq1.
\end{equation}
\newline
\emph{Step 2. Upper bound of $\mfc_\XX$ and $\mfc_T$}
\newline
We have studied the set $\SS^q$ in our previous examples in Section \ref{sec:setup}. In order to prove the tractability, it is natural to consider $\SS^q$ as a metric space with diameter equal to $2$ (cf. Remark \ref{rem:cover_dist}). To this end, we define
\begin{eqnarray*}
&&\rho_{\XX}(\x,\by)=\rho_{\YY}(\x,\by)=\frac{2}{\pi}\rho^*(\x,\by),\quad\x,\by\in\SS^q,
\end{eqnarray*}
where $\rho^*$ be the geodesic distance on $ \XX=\YY=T=\SS^q$.
Then by Proposition \ref{prop:coversphere},
$$N_{\rho_{\YY}}(\SS^q,\epsilon)=N_{\rho_{\XX}}(\XX,\epsilon)\leq \kappa_\SS q^{3/2}\log q\cos\epsilon\frac{1}{\sin^q\epsilon}\left(\frac{2}{\pi}\right)^q\leq \kappa_\SS q^{3/2}\log q\epsilon^{-q}.$$

So we can use $\mfc_\XX=\kappa_\SS q^2$ and $\mfc_T= \kappa_\SS q^{3/2}\log q$.   

In this proof only, we write
$$\BB_{\SS^q}(\by,\epsilon):=\BB_{\SS^q,{\rho_{\YY}}}(\by,\epsilon).$$
    \newline
\emph{Step 3. Choice of $r$ and upper bound of $|G|_r$}
\newline
    Fix $\x\in\SS^q$, write $G_\x=G(\x,\circ)$ and take
    $$r:=\gamma.$$
    In each interval $I$ of length $2\delta_0$, the univariable function $x_+^\gamma$ can be approximated by the $\lfloor \gamma\rfloor$-degree Taylor polynomial $P_I$ at the midpoint of $I$ with
    $$\sup\limits_{x\in I}|x_+^\gamma-P_I(x)|\leq2^\gamma\delta_0^\gamma.$$
    For any ball $\BB_{\SS^q}(\by,\delta)\subset\SS^q$, we have
$$|\x\cdot\by-\x\cdot\by'|\leq|\by-\by'|\leq\max\limits_{\by'\in\BB_{\SS^q}(\by,\delta)}\rho^*(\by,\by')=\frac{\pi}{2}\delta,$$
where $|\cdot|$ is the Euclidean norm.
Then $\displaystyle\x\cdot\by'\in I_{\by,\delta}:=\left[\x\cdot\by-\frac{\pi}{2}\delta,\x\cdot\by+\frac{\pi}{2}\delta\right]$ and
    \begin{equation*}
        \begin{split}
            E_r(\BB_{\SS^q}(\by,\delta);G_\x)\leq\max\limits_{\by'\in\BB_{\SS^q}(\by,\delta)}|(\x\cdot\by')_+^\gamma-P_{I_{\by,\delta}}(\x\cdot\by')|\leq2^\gamma\left(\frac{\pi}{2}\delta\right)^\gamma=(\pi\delta)^\gamma.
        \end{split}
    \end{equation*}
So we have $|G|_r\leq\pi^\gamma$.
\newline
\emph{Step 4. Choice of $u$ and upper bound of $|G|_{\Delta,R,u}$}
\newline
    $\Exx$ is the equator of $\SS^q$ given by
    $$\Exx=\left\{\by_1\in\SS^q:\ \by_1\cdot\x=0\right\}.$$
    If an interval $I$ of length $2\delta_0$ satisfies $\hbox{dist}(I,0)\geq\delta_1$, then by using the Taylor polynomials again, $x_+^\gamma$ can be approximated by the polynomial $P_I$ of degree $\lfloor R\rfloor$ with
    $$\sup\limits_{x\in I}|x_+^\gamma-P_I(x)|\leq2^R\delta_1^{\gamma-R}\delta_0^R.$$
    For any $\delta,\ \tilde\delta>0$ and $A\subset\Delta(\Exx,\tilde\delta)\cap\BB_{\SS^q}(\by,\delta)$ with some $\by\in A$, consider the set $I_A:=\{\x\cdot\by':\by'\in A\}$. Since $\max\limits_{\by'\in A}|\x\cdot\by-\x\cdot\by'|\leq|\by-\by'|\leq\pi\delta$, we have $I_A\subset[\x\cdot\by-\pi\delta,\x\cdot\by+\pi\delta]$. So we can take $\delta_0\leq\pi\delta$ for $I_A$. Also, for any $\bz\in A$, let $\bz'\in\Exx$ be the unique nearest point on the geodesic containing both $\x$ and $\bz$. Then $\rho^*(\x,\bz)+\rho^*(\bz,\bz')=\rho^*(\bz',\x)=\frac{\pi}{2}$. Hence
    \begin{equation*}
        \begin{split}
            |\x\cdot\bz|=\cos(\rho^*(\x,\bz))=\sin\left(\frac{\pi}{2}-\rho^*(\x,\bz)\right)=\sin\left(\rho^*(\bz,\bz')\right)\geq\frac{2}{\pi}\rho^*(\bz,\bz')=\rho_{\YY}(\bz,\bz')\geq\tilde\delta.
        \end{split}
    \end{equation*}
    So the distance $\mathrm{dist}(I_A,0)$ can be estimated as $\mathrm{dist}(I_A,0)=\min\limits_{z\in A}|\x\cdot\bz-0|\geq\tilde\delta$, and we can take $\delta_1\geq\tilde\delta$. Thus
    
    \begin{equation*}
      E_R(A;G_\x)\leq2^R\delta_1^{\gamma-R}\delta_0^R\leq2^R\tilde\delta^{\gamma-R}\left(\pi\delta\right)^R.
    \end{equation*}
    Consequently, we can take $u=R-\gamma$ and $|G|_{\Delta,R,R-\gamma}\leq\left(2\pi\right)^R$ in Theorem \ref{thm:main}.
\newline    
\emph{Step 5. Upper bound of $|G|_{\XX,\alpha}$ and $|G|_{\mathcal{G}}$}
\newline
    It is also easy to observe
    $$\left\|G(\x,\circ)-G(\x',\circ)\right\|\leq \gamma|\x-\x'|\leq r\rho^*(\x,\x')\leq \pi \kappa_\SS \gamma\rho_{\XX}(\x,\x').$$
    Hence we take $|G|_{\SS^q,1}\leq\pi \kappa_\SS \gamma$. Thus $|G|_{\mathcal{G}}$ can be bounded as
    $$|G|_{\mathcal{G}}=\max\left\{|G|_{\XX,\alpha},|G|_r,|G|_{\Delta,R,u}\right\}\leq \kappa_\SS (2\pi)^R.$$
\emph{Step 6. Choice of $s$ and upper bound of $\Theta_{\tau,G}$}
\newline
    Now consider $\Theta_{\tau,G}$ and $s$. By definition, $\SS^q$ can be covered by $\kappa_\SS q^{3/2}\log q\epsilon^{-q}$ balls $\BB_{\SS^q}(\x,\epsilon)$ with each point on $\SS^q$ be overlapped at most $\kappa_\SS q\log(1+q)$ times. Let $\{\BB_{\SS^q}(\bz_j,\epsilon)\}_{j=1}^K$ be the collection of those balls satisfying $\BB_{\SS^q}(\bz_j,\epsilon)\cap\BB_{\SS^q}(\Exx,\epsilon)\neq\varnothing$, then
    $$\BB_{\SS^q}(\bz_j,\epsilon)\subset\BB_{\SS^q}(\Exx,3\epsilon),\quad j=1,\dots,K.$$
    We have $\Exx=\{\by\in\SS^q:\x\cdot\by=0\}$. Then using the formula \cite[(7.31)]{mhaskar2023local}, the $\mu^*$-measure of $\BB_{\SS^q}(\Exx,3\epsilon)$ satisfies
    \begin{equation}\label{eqn:Ex_sphere}
        \mu^*(\BB_{\SS^q}(\Exx,3\epsilon))\leq\frac{\nu_{q-1}}{\nu_q}\int_{\frac{\pi}{2}-3\frac{\pi}{2}\epsilon}^{\frac{\pi}{2}+3\frac{\pi}{2}\epsilon}\sin^{q-1}\theta d\theta\leq3\sqrt{\pi(q+2)}\epsilon,
    \end{equation}
    where $\nu_n$ is the measure of the $n$-dimensional unit sphere for $n\in\NN$. Since each point can belong to at most
    $\kappa_\SS q\log(1+q)$
    of these balls (cf. Proposition \ref{prop:coversphere}), we have
    $$K\mu^*(\BB_{\SS^q}(\bz_1,\epsilon))=\sum\limits_{j=1}^K\mu^*(\BB_{\SS^q}(\bz_j,\epsilon))\leq \kappa_\SS q\log(1+q)\mu^*(\BB_{\SS^q}(\Exx,3\epsilon))\leq 3\kappa_\SS q\sqrt{\pi(q+2)}\log(1+q)\epsilon.$$
    Thus
    \begin{equation*}
        K\leq\frac{3\kappa_\SS q\sqrt{\pi(q+2)}\log(1+q)}{\mu^*(\BB_{\SS^q}(\bz_1,\epsilon))}\epsilon=\frac{3\kappa_\SS q\sqrt{\pi(q+2)}\log(1+q)}{\mu^*\left(\BB_{\SS^q,\rho^*}\left(\bz_1,\frac{\pi}{2}\epsilon\right)\right)}\epsilon.
    \end{equation*}
    Since $\BB_{\SS^q}(\bz_1,\epsilon),\dots,\BB_{\SS^q}(\bz_K,\epsilon)$ are all the balls that intersects with $\BB_{\SS^q}(\Exx,\epsilon)$, we get from \eqref{eqn:cond_Ex_sphere} that
    \begin{equation*}
        \begin{split}
            |\tau|\left(\BB_{\SS^q}(\Exx,\epsilon)\right)\leq&\sum\limits_{j=1}^K|\tau|\left(\BB_{\SS^q}(\bz_j,\epsilon)\right)=\sum\limits_{j=1}^K|\tau|\left(\BB_{\SS^q,\rho^*}\left(\bz_j,\frac{\pi}{2}\epsilon\right)\right)\leq\sum\limits_{j=1}^K\Xi_\tau|\tau|_{TV}\mu^*\left(\BB_{\SS^q,\rho^*}\left(\bz_j,\frac{\pi}{2}\epsilon\right)\right)\\
            =&\Xi_\tau|\tau|_{TV}K\mu^*\left(\BB_{\SS^q,\rho^*}\left(\bz_1,\frac{\pi}{2}\epsilon\right)\right)\leq 3\kappa_\SS \Xi_\tau|\tau|_{TV}q\sqrt{\pi(q+2)}\log(1+q)
        \end{split}
    \end{equation*}
    Thus we get $s=q-1$ and
    \begin{equation}\label{eqn:thetaGest_thm4.1}
        \Theta_{\tau,G}\leq 3\kappa_\SS \Xi_\tau q\sqrt{\pi(q+2)}\log(1+q)\leq3\pi  \Xi_\tau\kappa_\SS q^{3/2}\log q.
    \end{equation}
\newline
    \emph{Step 7. Substitute the values above and get the conclusion}
    \newline
    Now we can apply \eqref{eqn:main1}. We note that $\alpha=1$, $q(q+2R-2ua)\le q(q+2R)\le (q+2R)^2$, and use the rest of the values as computed so far to conclude there exist $\{\by_1,\dots,\by_N\}\subset\SS^q$ and numbers $a_1,\dots,a_N$ such that
    $$\left\|\int_{\SS^q}G(\x,\by)d\tau(\by)-\sum\limits_{k=1}^Na_kG(\circ,\by_k)\right\|_{\SS^Q}\leq c_2|\tau|_{TV}\frac{\sqrt{1+\log N}}{N^{\frac{1}{2}+\frac{\gamma}{q}+\frac{\lambda}{2q}}},$$
    where $\lambda=\frac{2R-2\gamma}{2R-2\gamma+1}$ and
    \begin{equation*}
    \begin{split}
        c_2=16\sqrt{\pi}e\kappa_\SS (2\pi)^R\left[\kappa_\SS q^{3/2}(3(q+1)^{R+1}+6)\log q\right]^{\frac{1}{2}+\frac{R}{q}}\left[(q+2R+\log(\kappa_\SS q^2))(6\Xi_\tau)^{1/2}+1\right].
    \end{split}
    \end{equation*}
    This proves \eqref{eqn:rateReLUr1} in the case that $q=Q$.
\qed

\subsubsection{Proof for $q<Q$}

In this case, the terms $r,\ |G|_r,\  D_k,\ \mfc_\XX,\ \mfc_T$, and $|G|_{\XX,\alpha}$ are the same as those in the case \ref{subsec:q=Q,gammanotN}. 
The main difference here is $\Exx$ is no longer a equator of $\SS^q$. In fact, $G_\x$ is only $\gamma$-smooth for any
$$\x\in\left\{\x\in\SS^Q:\ x_1=\dots=x_{q+1}=0\right\}.$$

Therefore, for simplicity, we just choose $\Exx=\varnothing$ for all $\x\in\SS^Q$. Then by choosing $R=r=\gamma$, $|G|_{\Delta,r,0}=|G|_r=\pi^\gamma$. So we can use
    $$|G|_{\mathcal{G}}\leq \kappa_\SS \pi^\gamma.$$
Apply Theorem \ref{thm:main2} and substitute those values, we conclude there exists $\{\by_1,\dots,\by_N\}\subset T$ and numbers $a_1,\dots,a_N$ with $\sum\limits_{k=1}^N|a_k|\leq|\tau|_{TV}$ such that
\begin{equation*}
    \left\|f-\sum\limits_{k=1}^Na_kG(\circ,\by_k)\right\|_\XX\leq c_2|\tau|_{TV}\frac{\sqrt{1+\log N}}{N^{1+\frac{r}{q}}}.
\end{equation*}
where
    \begin{equation*}
    \begin{split}
        c_2=16\kappa_\SS \pi^\gamma\left[\kappa_\SS q^{3/2}(3(q+1)^\gamma+6)\log q\right]^{1+\frac{\gamma}{q}}\left[\left(\frac{Q(q+2\gamma)}{2}+\log(\kappa_\SS Q^2)\right)^{1/2}(\kappa_\SS q^{3/2}\log q)^{-1/2}+1\right].
    \end{split}
    \end{equation*}

\qed

\subsection{Proof of Theorem~\ref{thm:ReLUr2}}\label{sec:pfcor1_2}

    In this proof, $q=Q$ and $\gamma\in\NN$. In this case, all the terms are the same as those in the non-integer case \ref{subsec:q=Q,gammanotN} except $|G|_{\Delta,R,u}$ and $D_R$.

    Since $G(\x,\circ)$ is exactly a polynomial of degree $\leq \gamma$ on $\Delta(\Exx,\delta)$, we can take $\Pi_R=\Pi_\gamma$ for all $R\ge \gamma$, then $|G|_{\Delta,R,u}=0$, $D_R\leq(q+1)^\gamma$ for all $R\geq \gamma$ and $u\geq0$.
    
    In this case,
    $$|G|_{\mathcal{G}}\leq (2\pi\kappa_\SS)^\gamma.$$
    Take $u=R-\gamma$ for all $R\geq\gamma$, then
    $$R-ua=R-(R-\gamma)\frac{2R-2\gamma}{2R-2\gamma+1}=(R-\gamma)\left(1-\frac{2R-2\gamma}{2R-2\gamma+1}\right)+\gamma=\frac{R-\gamma}{2R-2\gamma+1}+\gamma\leq \gamma+1.$$
    Therefore, \eqref{eqn:main1} implies
    $$\left\|\int_{\SS^q}G(\x,\by)d\tau(\by)-\sum\limits_{k=1}^Na_kG(\circ,\by_k)\right\|_{\SS^Q}\leq c'\frac{\sqrt{1+\log N}}{N^{\frac{1}{2}+\frac{\gamma}{q}+\frac{\lambda}{2q}}},$$
    where $\lambda=\frac{2R-2\gamma}{2R-2\gamma+1}$ and
    \begin{equation*}
    \begin{split}
        c'=&16\sqrt{\pi}e\kappa_\SS(2\pi)^\gamma\left[\kappa_\SS q^{3/2}(3(q+1)^\gamma+6)\log q\right]^{1+\frac{\gamma+1}{q}}\left[(q+\gamma+1+\log(\kappa_\SS q^2))(6\Xi_\tau)^{1/2}+1\right].
    \end{split}
    \end{equation*}
    This holds for all $R\geq\gamma$, we take $R$ sufficiently large such that
    $$N^{\lambda}=N^{\frac{2R-2\gamma}{2R-2\gamma+1}}\ge N/2.$$
    Then
    $$\left\|\int_{\SS^q}G(\x,\by)d\tau(\by)-\sum\limits_{k=1}^Na_kG(\circ,\by_k)\right\|_{\SS^Q}\leq c_2'|\tau|_{TV}\frac{\sqrt{1+\log N}}{N^{\frac{1}{2}+\frac{2\gamma+1}{2q}}}.$$
  where $c_2'=2c'$ is the constant in \eqref{eqn:integer_gamma_const}.  
\qed

\subsection{Proof of Theorem \ref{thm:zonal}}\label{sec:pfcor2}

For each $\x\in\SS^q$, $\Exx=\{\x\}$. Let $\tilde\rho^*$ be the geodesic distance on $\SS^Q$, we define the distance $\rho_{\XX}$ on $\XX=\SS^Q$ as \eqref{eqn:rho2_modify}:
$$\rho_{\XX}(\x,\by)=\frac{2}{\pi }\tilde\rho^*(\x,\by),\quad\x,\by\in\SS^Q.$$
Define $\rho^*$ be the geodesic distance on $ \YY=T=\SS^q$ and denote the distance $\rho_{\YY}$ on $\SS^q$ as \eqref{eqn:rho1_modify}:
$$\rho_{\YY}(\x,\by)=\frac{2}{\pi}\rho^*(\x,\by),\quad\x,\by\in\SS^q.$$
Using the same argument as in Section \ref{sec:pfcor1}, we take
$$\mfc_\XX\leq\kappa_\SS Q^{3/2}\log Q\leq\kappa_\SS Q^2,\quad\mfc_T\leq\kappa_\SS q^{3/2}\log q.$$

We take $r=2\gamma$. Using a similar discussion as in Section \ref{sec:pfcor1}, we can bound
$$|G|_{r}\leq\pi^{2\gamma},\qquad|G|_{\SS^Q,1}\leq2\pi \kappa_\SS\gamma.$$

The number $R$ only need to satisfy the condition $R\geq r=2\gamma$, hence we can choose $R=r$, for which $u=0$ and $|G|_{\Delta,R,0}\leq|G|_{r}\leq\pi^{2\gamma}$. Thus
$$|G|_\mathcal{G}\leq \kappa_\SS\pi^{2\gamma}.$$
In this case,
$$D_R+2=\binom{q+\lfloor2\gamma+1\rfloor}{\lfloor2\gamma+1\rfloor}+2\leq(q+1)^{2\gamma+1}+2.$$
By \eqref{eqn:main2}, there exists $\{\by_1,\dots,\by_N\}\subset\SS^q$ and numbers $a_1,\dots,a_N$, such that
\begin{equation*}
    \left\|f-\sum\limits_{k=1}^Na_kG(\circ,\by_k)\right\|_{\SS^Q}\leq c_3|\tau|_{TV}\left(\frac{1+\log N}{N^{1+4\gamma/q}}\right)^{1/2},
\end{equation*}
where
    \begin{equation*}
    \begin{split}
        c_3=&8\kappa_\SS \pi^{2\gamma}\left[\kappa_\SS q^{3/2}(3(q+1)^{2\gamma+1}+6)\log q\right]^{1+\frac{2\gamma}{q}}\left[\left(\frac{Q(q+2\gamma)}{2}+\log(\kappa_\SS Q^2)\right)^{1/2}\left(\kappa_\SS q^{3/2}\log q\right)^{-\frac{1}{2}}+1\right].
    \end{split}
    \end{equation*}

\qed

\subsection{Proof of Theorem~\ref{thm:Gaussian}}\label{sec:pfcor3}

In this example, we take $\XX=B^Q$, $\YY=B^q$, $\rho_{\XX}=\rho_\YY$ be the Euclidean distance in $\RR^Q$. Then the diameter of $B^q$ and $B^Q$ is $2$ (cf. Remark \ref{rem:cover_dist}). We have $\Exx=\{\x\}.$

    Let $\x\in\XX$, $\by\in\YY$. Since $G_\x=\exp(-|\x-\circ|)$ is a Lipschitz function, we can take $\alpha=1$, and for any $\by'\in\BB_{\YY}(\by,\delta)$,
    $$|G_\x(\by)-G_\x(\by')|\leq \biggl|-|\x-\by|+|\x-\by'|\biggr|\leq|\by-\by'|\leq \delta.$$
    Then
    $$E_1(\BB_\YY(\by,\delta),G_\x)\leq\sup\limits_{\by'\in\BB_{\YY}(\by,\delta)}|G_\x(\by')-G_\x(\by)|\leq \delta.$$
    Taking $R=r=1$, this implies $|G|_1\leq 1$. It is a trivial fact that $u=0$, and $|G|_{\Delta,1,0}\leq|G|_1\leq 1$.

    Using the same argument, we have
    $$|G(\x,\by)-G(\x',\by)|\leq\biggl|-|\x-\by|+|\x'-\by|\biggr|\leq|\x-\x'|\leq\rho_\XX(\x,\x').$$

    Then $|G|_{\XX,1}\leq 1$. So we have
    $$|G|_{\mathcal{G}}\leq1.$$
    Again, we have
    $$D_1+2\leq(q+1)+2=q+3.$$

    Notice that covering a ball of radius $1$ by balls of radius $\epsilon$ is equivalent to covering a ball of radius $(2\epsilon)^{-1}$ by balls of radius $1/2$, we can conclude by \cite[Theorem 3.1]{verger2005covering} that there exists an absolute constant $\kappa_B$ such that
    $$N(\YY,\epsilon)\leq (\kappa_Bq^{3/2}\log q)\epsilon^{-q},\quad N(\XX,\epsilon)\leq(\kappa_BQ^{3/2}\log Q)\epsilon^{-Q}.$$
    This implies $\mfc_T\leq\kappa_Bq^{3/2}\log q$ and $\mfc_\XX\leq\kappa_BQ^{3/2}\log Q$.

    Applying \eqref{eqn:main2}, we conclude for any $N\geq3(q+1)+6$, there exists $\{\by_1,\dots,\by_N\}\subset B^q$ and numbers $a_1,\dots,a_N$, such that
\begin{equation*}
    \left\|f-\sum\limits_{k=1}^Na_kG(\circ,\by_k)\right\|_{B^Q}\leq c_4|\tau|_{TV}\left(\frac{1+\log N}{N^{1+2/q}}\right)^{1/2},
\end{equation*}
where
    \begin{equation*}
    \begin{split}
        c_4=8\left[(\kappa_Bq^{3/2}\log q)(3q+9)\right]^{1+\frac{1}{q}}\left[\left(\frac{q+2}{2}Q+2\log Q+\log\kappa_B\right)^{1/2}\left(\kappa_Bq^{3/2}\log q\right)^{-1/2}+1\right].
    \end{split}
    \end{equation*}

\qed



\end{document}